\theoremstyle{definition} 
\newtheorem{lemma}{Lemma}
\algrenewcommand{\algorithmicrequire}{\textbf{Input:}}
\algrenewcommand{\algorithmicensure}{\textbf{Output:}}
\theoremstyle{thmstyleone}
\newtheorem{example}{Example}
\newtheorem{remark}{Remark}
\theoremstyle{thmstylethree}
\begin{document}

\title[Statistical Inference for Autoencoder-based Anomaly Detection after Representation Learning-based Domain Adaptation]{Statistical Inference for Autoencoder-based Anomaly Detection after Representation Learning-based Domain Adaptation}

\author[1,2]{Tran Tuan Kiet} %\email{iauthor@gmail.com}

\author[1,2]{Nguyen Thang Loi} %\email{iiauthor@gmail.com}

\author*[1,2,3]{Vo Nguyen Le Duy}\email{duyvnl@uit.edu.vn}

\affil[1]{\orgname{University of Information Technology}, \city{Ho Chi Minh City}, \country{Vietnam}}

\affil[2]{\orgname{Vietnam National University}, \city{Ho Chi Minh City}, \country{Vietnam}}

\affil[3]{\orgname{RIKEN AIP}, \city{Tokyo}, \country{Japan}}

\abstract{
Anomaly detection (AD) plays a vital role across a wide range of domains, but its performance might deteriorate when applied to target domains with limited data. Domain Adaptation (DA) offers a solution by transferring knowledge from a related source domain with abundant data. However, this adaptation process can introduce additional uncertainty, making it difficult to draw statistically valid conclusions from AD results. 
In this paper, we propose STAND-DA---a novel framework for statistically rigorous Autoencoder-based AD after Representation Learning-based DA.
Built on the Selective Inference (SI) framework, STAND-DA computes valid $p$-values for detected anomalies and rigorously controls the false positive rate below a pre-specified level $\alpha$ (e.g., 0.05). 
To address the computational challenges of applying SI to deep learning models, we develop the GPU-accelerated SI implementation, significantly enhancing both scalability and runtime performance. This advancement makes SI practically feasible for modern, large-scale deep architectures. Extensive experiments on synthetic and real-world datasets validate the theoretical results and computational efficiency of the proposed STAND-DA method.

}

\keywords{Anomaly Detection, Domain Adaptation, Autoencoder,  Statistical Hypothesis Testing, $p$-value}

\maketitle

\section{Introduction}\label{sec1}

Anomaly detection (AD)---the task of identifying data points that deviate markedly from normal patterns---is a fundamental problem in both statistics and machine learning. Its importance is evident in the extensive body of research surveyed by \cite{aggarwal2016introduction}. 
The aim of AD is to detect rare or unexpected instances that may signal critical events, errors, or emerging phenomena. It plays a central role across various domains: in healthcare, it helps uncover abnormal clinical conditions \cite{wong2002rule, aggarwal2005abnormality}; in finance and cybersecurity, it enables the detection of fraudulent behavior \cite{pourhabibi2020fraud}; and in engineering, it supports damage detection \cite{avci2021review, du2020damage}.

\vspace{5pt}

AD might face significant challenges when applied to target domains with limited data—a situation frequently encountered in real-world settings. In such cases, models trained solely on the scarce target data may fail to capture meaningful patterns, leading to unreliable detection performance. One effective approach to address this issue is to incorporate additional information from a related source domain that contains abundant data. This strategy is known as Domain Adaptation (DA). 
DA aims to mitigate the distribution shift between the source and target domains by transferring knowledge in a way that improves model performance in the target domain. In the context of AD, this allows models to generalize better despite the lack of sufficient data in the target environment. By enriching the target dataset with informative structures learned from the source, DA can significantly enhance the robustness and accuracy of AD in low-data regimes.

\vspace{5pt}

One of the major challenges in AD is the risk of incorrectly identifying normal data points as anomalies, known as \emph{false positives}. 
These errors can have serious consequences, especially in high-stakes domains where decisions informed by AD may directly impact human lives or critical systems.
This issue might become even more pronounced in the context of DA, where discrepancies between the source and target domains can introduce additional uncertainty. 
For example, in medical applications, individuals with certain conditions in the source domain might be mapped to resemble healthy individuals in the target domain. This misalignment could lead to the erroneous classification of truly healthy patients as anomalous, potentially resulting in unnecessary or harmful interventions.
Given the implications of such mistakes, it is essential to develop inference methods that can rigorously control the false positive rate (FPR) and ensure the reliability of AD outcomes after DA.

\vspace{5pt}

In the context of AD following DA, it is equally crucial to manage the false negative rate (FNR), which is the probability of failing to identify actual anomalies. Statistical practice typically prioritize controlling the FPR at a predefined level of guarantee, e.g., $\alpha = 5\%$, while simultaneously striving to reduce the FNR, thereby improving the true positive rate (TPR). Adhering to this commonly used strategy, our work introduces a method that provides theoretical guarantees for properly controlling  the probability of incorrectly flagging normal data as anomalous, while also aiming to minimize the risk of overlooking true anomalies.

\vspace{5pt}

The work by \cite{le2024cad} is the first to introduce a statistical testing procedure for AD results following DA. 
By leveraging the Selective Inference (SI) framework \cite{lee2016exact}, their approach provides valid $p$-values for anomalies identified after adapting the source domain to the target, ensuring statistical validity in this context.
However, their work primarily addresses a simple setup that combines Optimal Transport (OT)-based DA \cite{flamary2016optimal} with a classical AD technique, namely Median Absolute Deviation (MAD).
While effective for this setting, the method does not generalize to modern deep learning (DL) models commonly used in both DA and AD tasks, which are significantly more complex than OT-based mappings and traditional detectors. 
As a result, applying their approach to DL-based DA and AD tasks remains infeasible.

\vspace{5pt}

In this paper, we carefully analyze the complex selection process associated with the intricate architecture of DL models used for DA and AD tasks. We then introduce a valid $p$-value by employing the SI framework to perform a rigorous statistical test on anomalies identified through this sophisticated procedure.
We note that, in this paper, we adopt the Representation Learning (RL)-based DA method introduced in \cite{shen2018wasserstein} as our representative DL-based DA approach, and the Autoencoder (AE)-based AD method as our DL-based AD technique, since these models are among the most widely used in both academia and industry.
The detailed discussions on extensions to other models are provided in \S\ref{sec:conclusion}.

\vspace{5pt}

\textbf{Contributions}. Our contributions are as follows:

\begin{itemize}

	\item We propose a novel statistical method, named \emph{STAND-DA} (\underline{St}atistical Inference for AE-based \underline{An}omaly \underline{D}etection after RL-based \underline{DA}), to test the results of AE-based AD in the context of RL-based DA. 
The proposed STAND-DA addresses the critical challenge of accounting for the impact of RL-based DA on downstream AE-based AD results, enabling statistically valid inference and providing well-calibrated \emph{p}-values for the detected anomalies. 
To the best of our knowledge, this is the first method that rigorously controls the FPR in the context of DL-based AD and DA.
	
	\vspace{5pt}
	
	\item We provide a GPU-accelerated implementation to enhance both the speed and practical applicability of SI when applied to complex DL models. Recently, the authors of \cite{katsuoka2025si4onnx} released Python packages for performing SI in DL models. However, their implementation runs only on CPUs, with the drawback of being computationally expensive and impractical for large-scale deep models---thus limiting the applicability of SI. To the best of our knowledge, this is the first publicly available Python library that supports GPU acceleration for SI, significantly improving its practicality for DL applications.
	
	\vspace{5pt}
	
	\item We conduct extensive experiments on both synthetic and real-world datasets to rigorously validate our theoretical results and computational efficiency, demonstrating the superior performance of the proposed STAND-DA method. The Python package is accessible via:

\begin{center}
	\url{https://github.com/DAIR-Group/STAND-DA} 
\end{center} 

\end{itemize}

\begin{example}
To highlight the importance of the proposed STAND-DA method, we present the illustrative example in Fig. \ref{fig:STAND-DA}.
The objective is to identify heart disease patients from Hospital T as anomalies in a scenario where only a small number of patient records are available. The source domain contains data from Hospital S, while the target domain corresponds to patients from Hospital T.
We begin by applying a RL-based DA technique to extract domain-invariant representations from both domains. Next, an AE-based AD model is used, marking the top 5$\%$ of patient records with the highest reconstruction errors as anomalies. 
This baseline approach, however, mistakenly identified two healthy individuals as anomalous cases.
To reduce such false results, we introduce an additional statistical inference step based on the proposed $p$-values, enabling us to identify both true and false detections. 
Furthermore, the experiment was repeated $N$ times, with the FPR results summarized in Table \ref{tbl:example}.
The results show that our method consistently maintained the FPR below the significance threshold of $\alpha = 0.05$, whereas competing methods failed to achieve this control.

\end{example}

%\vspace{-20pt}
\begin{figure*}[!t]
\centering
\includegraphics[width=\textwidth]{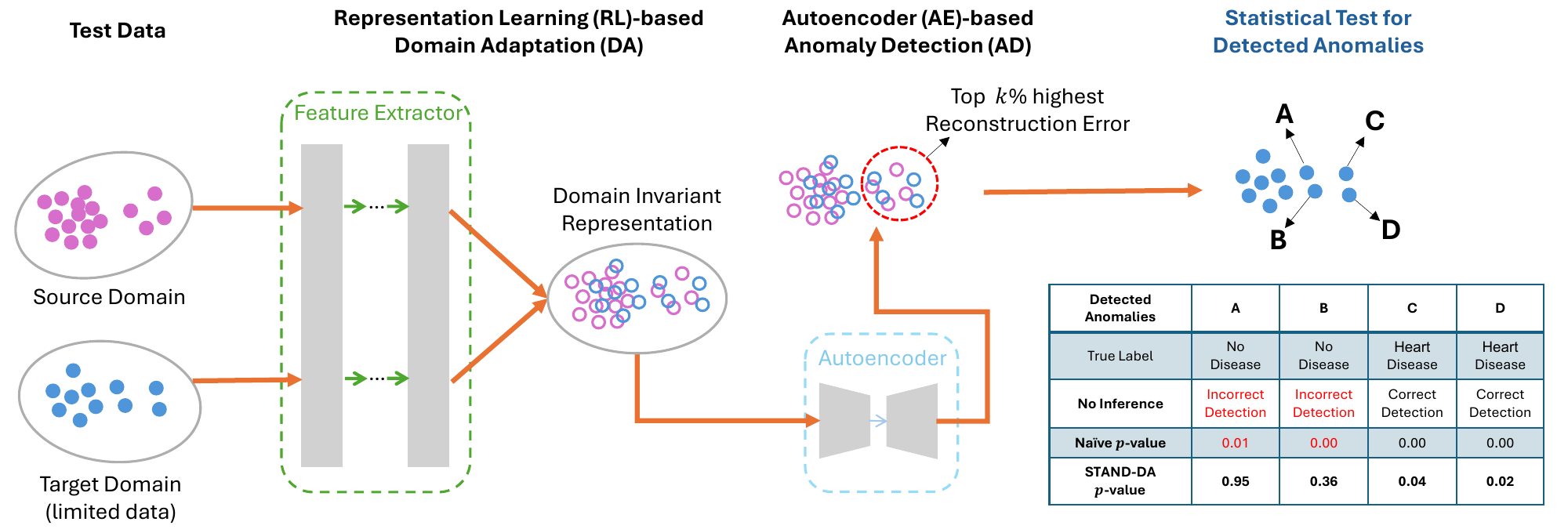}

\vspace{15pt}
\caption{Illustration of the proposed STAND-DA method. 
Performing AD-DA without the inference step leads to incorrect anomaly detections, such as cases {\bf A} and {\bf B}. In this setting, naïvely computed $p$-values remain small even for false detections. In contrast, the proposed STAND-DA method correctly identify both false positives (FPs) and true positives (TPs), yielding large $p$-values for FPs and small $p$-values for TPs.
}
    \label{fig:STAND-DA}
\vspace{-10pt}
\end{figure*}

\begin{table}[!t]
\renewcommand{\arraystretch}{1.3}
\centering
\caption{The key strength of the proposed STAND-DA lies in its ability to control the False Positive Rate (FPR).}
\label{tbl:example}
\begin{tabular}{|p{2cm}|p{2cm}|p{2cm}|p{3cm}|}
\hline
& \textbf{No Inference} & \textbf{Naive} & \textbf{STAND-DA} \\
\hline
  $N = 120$ & FPR = 1.0 & 0.45 & \textbf{0.050} \\
   \hline
  $N = 240$ & FPR = 1.0 & 0.70 & \textbf{0.052} \\
  \hline
\end{tabular}
\vspace{-10pt}
\end{table}

\textbf{Related works}. 
While AD has been extensively studied in the literature \cite{aggarwal2016introduction}, relatively little attention has been given to evaluating AD outcomes through a hypothesis testing framework.
Some earlier works, such as \cite{srivastava1998outliers} and \cite{pan1995multiple}, explore the use of likelihood ratio tests within the mean-shift model to assess whether a specific data point deviates significantly from the norm---effectively treating it as an outlier.
However, these traditional statistical approaches are only valid when the anomalies under consideration are specified a priori.
If one instead applies these methods post hoc---i.e., after anomalies have been identified by an AD algorithm---the statistical guarantee breaks down, and crucially, the FPR can no longer be reliably controlled.

\vspace{5pt}

To maintain control over the FPR when applying traditional statistical tests, it is crucial to adjust for multiple comparisons.
A widely adopted approach is the Bonferroni correction, which compensates for the number of hypotheses by applying a stringent threshold that increases with the dataset size.
In particular, the adjustment factor can reach up to $2^n$, where $n$ represents the number of data points.
This exponential scaling quickly renders the correction overly strict, making it impractical for larger datasets and resulting in excessively conservative conclusions.

\vspace{5pt}

SI has gained attention as an effective strategy for addressing the shortcomings of traditional statistical testing---particularly its reliance on overly conservative multiple testing corrections.
Rather than considering the exponentially increasing value of $2^n$, SI focuses on the specific outcome produced by the detection algorithm.
By conditioning the inference on this observed set of anomalies, the correction factor is effectively reduced to 1, enabling valid statistical conclusions without sacrificing power.
This idea of conditioning on the selection event forms the foundation of the conditional SI framework, originally introduced in the seminar work of \cite{lee2016exact}, where it was applied to inference on features selected by the Lasso.

\vspace{5pt}

Building on its foundational role in enabling valid inference after feature selection, the seminal work of \cite{lee2016exact} has catalyzed a broad wave of research extending SI to a wide range of learning settings. 
In the supervised domain, SI have been adapted to accommodate increasingly complex models---including boosting~\cite{rugamer2020inference}, decision trees~\cite{neufeld2022tree}, kernel-based methods~\cite{yamada2018post}, higher-order interaction models~\cite{suzumura2017selective,das2021fast}, and DL~\cite{duy2022quantifying, miwa2023valid}. This body of work has expanded the applicability of SI well beyond its original focus on linear models and Lasso-based variable selection~\cite{loftus2014significance, fithian2015selective, tibshirani2016exact, yang2016selective, hyun2018exact, sugiyama2021more, fithian2014optimal, duy2021more}.
Importantly, the versatility of SI also extends to unsupervised learning tasks, where it has been applied to problems such as change point detection~\cite{umezu2017selective, hyun2018post, duy2020computing, sugiyama2021valid, jewell2022testing}, clustering~\cite{lee2015evaluating, inoue2017post, gao2022selective, chen2022selective}, and segmentation~\cite{tanizaki2020computing, duy2022quantifying}. Beyond model-based inference, SI has even been employed for hypothesis testing on complex distance metrics, including Dynamic Time Warping (DTW) \cite{duy2022exact} and Wasserstein distances \cite{duy2021exact}.

\vspace{5pt}

The work most closely aligned with ours is that of \cite{le2024cad}, which marks the first attempt to establish a formal hypothesis testing framework for AD following DA.
Drawing on the SI framework, their approach provides statistically valid $p$-values for anomalies detected after aligning source and target domains—thus ensuring reliable inference in this setting.
However, their focus is limited to a relatively simple scenario, where domain adaptation is performed using OT methods \cite{flamary2016optimal} and anomalies are identified using the classical MAD technique.
While their method is effective within this narrow context, it lacks the flexibility to handle the complexities of modern DL-based approaches to both DA and AD. These methods often involve sophisticated processes, making the original SI-based procedure inapplicable.
Consequently, extending their method to DL-driven pipelines remains an open challenge.

\section{Problem Statement}\label{sec2}

Let us consider two random matrices $X^s \in \mathbb{R}^{n_s \times d}$ and $X^t \in \mathbb{R}^{n_t \times d}$ defined as follows:
\begin{align*}
	X^s &= 
	\big ( 
	{\bm X^s_1}^\top,
	{\bm X^s_2}^\top,
	\dots,
	{\bm X^s_{n_s}}^\top
	\big )^\top
	= 
	M^s + \cE^s,
	\quad \cE^s \sim \MM \NN_{n_s \times d} (0, U^s, V^s),
	\\
	X^t &= 
	\big ( 
	{\bm X^t_1}^\top,
	{\bm X^t_2}^\top,
	\dots,
	{\bm X^t_{n_t}}^\top
	\big )^\top
	= 
	M^t + \cE^t,
	\quad \cE^t \sim \MM \NN_{n_t \times d} (0, U^t, V^t),
\end{align*}
where $n_s$ and $n_t$ are the number of instances in the source and target domains, $\bm{X}^s_i$ is a vector in $\mathbb{R}^d$ for each $i \in [n_s] = \{1, 2, \dots, n_s\}$, and $\bm{X}^t_j \in \RR^d$ for any $j \in [n_t]$.
The matrices $M^s \in \RR^{n_s \times d}$ and $M^t \in \RR^{n_t \times d}$ are unknown signal matrices.
The $\cE^s$ and $\cE^t$ are noise matrices that follow \emph{matrix normal distributions}, where $U^s$ and $U^t$ are the row covariance matrices, and $V^s$ and $V^t$ are the column covariance matrices.
The $U^s$, $U^t$, $V^s$, and $V^t$ are assumed to be known or estimable from independent data.
In this paper, we consider the setting in which the number of instances in the target domains is limited, i.e., $n_t$ is much smaller than $n_s$.
The goal is to conduct a statistical test on the results of AE-based AD after RL-based DA.

Furthermore, because the notation of matrix normal distributions might be complicated, we will use the following equivalent vectorized forms:
\begin{align*}
    {\rm vec}(X^s) &\sim \NN({\rm vec}(M^s), \Sigma^s),
    \\
    {\rm vec}(X^t) &\sim \NN({\rm vec}(M^t), \Sigma^t),
\end{align*}
where ${\rm vec}(.)$ denotes the vectorization operator that transforms a matrix into a vector with concatenated rows, $\otimes$ denotes the Kronecker product, $\Sigma^s = U^s \otimes V^s$, and $\Sigma^t = U^t \otimes V^t$.
This equivalence is a direct consequence of the standard definition of the matrix normal distribution \cite{gupta2018matrix}.

\vspace{5pt}

We would like to note that our primary objective is to perform statistical inference in the testing phase, not the training phase. 
This distinction is crucial, as we aim to evaluate the reliability of the outcomes produced by models that have already been trained. 
Accordingly, the datasets $X^s$ and $X^t$ serve as test data. 
These datasets are passed through a pipeline consisting of two already trained models: a RL-based DA model and an AE-based AD model.
Our objective is to rigorously assess the statistical reliability of the AD results produced by this pipeline on $X^s$ and $X^t$---that is, whether they are reliable---taking into account the models' selective behavior and the fact that no further learning or parameter updates are performed during the testing phase.

\subsection{Wasserstein Distance Guided Representation Learning for Domain Adaptation (RL-based DA) \cite{shen2018wasserstein}} 

\textbf{Training phase.} In RL-based DA's learning approach \cite{shen2018wasserstein}, a feature extractor---typically implemented as a neural network---is employed to learn domain invariant representations from both the source and target domains. 
Given an instance $\bm X \in \RR^d$ from either domain, the feature extractor learns a mapping $f_{\rm extractor}: \mathbb{R}^d \mapsto \mathbb{R}^{d^\prime}$ that projects the input into a $d^\prime$-dimensional representation.
Then, to minimize the discrepancy between the source and target  distributions, a domain critic is used, as suggested in \cite{arjovsky2017wasserstein}, with the objective of estimating the Wasserstein distance between the two distributions.

\vspace{5pt}

\textbf{Testing phase.}
In the testing phase, the trained feature extractor $f_{\rm extractor}$ is applied to the test data to obtain the domain invariant representations $\tilde{X}^s$ and $\tilde{X}^t$:
\begin{align} \label{eq:domain_invariant_features}
	\tilde{X}^s = f_{\rm extractor} (X^s),
	\quad 
	\tilde{X}^t = f_{\rm extractor} (X^t).
\end{align}
These representations are then used as input for the downstream AE-based AD task.

\subsection{Autoencoder-based Anomaly Detection after RL-based DA}

Autoencoder (AE)-based anomaly detection leverages neural networks trained to reconstruct normal data patterns by compressing inputs into a lower-dimensional representation and then decoding them back.
During training, the AE learns to reconstruct normal inputs by minimizing the reconstruction error. 
Since the model captures the underlying structure of normal patterns, it typically fails to accurately reconstruct anomalous inputs, leading to higher reconstruction errors.
These errors can then serve as anomaly scores, with larger values indicating potential outliers.
A common approach for identifying anomalies is to consider a predefined percentage of instances with the highest reconstruction errors as anomalous.
In this paper, we consider the top 5\% of instances with the highest reconstruction errors as anomalies.

\vspace{5pt}

Given the trained AE-based AD model, we apply it to the domain invariant feature representations of both the source and target domains, as defined in \eq{eq:domain_invariant_features}, to obtain a set $\cO$ containing the indices of anomalies in the target domain:
\begin{equation} \label{eq:detected_anomalies}
        \mathcal{A}: 
        \big \{ 
        \tilde{X}^s, \tilde{X}^t
        \big\} \mapsto \mathcal{O} \subset [n_t],
\end{equation}
where $\cA$ denotes the AD algorithm.
In this paper, we primarily focus on AE-based AD using mean absolute error (MAE) as the reconstruction error, as it is one of the most commonly used methods in both academia and industry. However, our approach can be extended to other AD models and alternative reconstruction error as discussed in \S\ref{sec:conclusion}.

\subsection{Statistical inference and decision making with a $p$-value}
To rigorously assess the statistical significance of the identified anomalies, we formulate and test the following null and alternative hypotheses for each $j \in \cO$:
\begin{align*}
	{\rm H}_{0, j}:
	M^t_{j, k} = \bar{M}^t_{\cO^c, k}, ~\forall k \in [d]
	\quad \text{vs.} \quad
	{\rm H}_{1, j}:
	\exists k \in [d], ~M^t_{j, k} \neq \bar{M}^t_{\cO^c, k}
\end{align*}
where 
\begin{align*}
    \bar{M}^t_{\cO^c, k} = \frac{1}{n_t - |\mathcal{O}|} \sum \limits_{ \ell \in [n_t] \setminus \mathcal{O}} X_{\ell, k}^t,
\end{align*}
the convention $\cO^c$ means ``$\cO$'s complement". In other words, our goal is to test whether each detected anomaly $j \in \cO$ truly deviates from the remaining data points after excluding the anomaly set $\cO$ itself.
To test the hypotheses, the test statistic is defined as:
\begin{equation}
\label{eq:test_statistic}
        T_j =
        \sum \limits_{k \in [d]} 
        \left | 
        X^t_{j, k} - \bar{X}^t_{j, k}
        \right |
	= 
	\bm \eta_j^\top {\rm vec} 
	\begin{pmatrix} X^s \\ X^t \end{pmatrix},
\end{equation} 
where
\begin{equation}
\label{eq:etaj}
        \boldsymbol{\eta}_j = {\rm vec}
        \begin{pmatrix}
        0^s \\
        \boldsymbol{s}_j^T \odot \left (E_j^t - \tfrac{1}{n_t - |\mathcal{O}|} E_{\mathcal{O}^c}^t \right )
        \end{pmatrix}.
\end{equation}
Here, 
$0^s \in \mathbb{R}^{n_s \times d}$ denotes a matrix with all elements set to zero. 
The matrix $E_j^t \in \mathbb{R}^{n_t \times d}$ has all entries in the the $j^\text{th}$ row set to 1 and all other entries set to 0.
Similarly, $E_{\mathcal{O}^c}^t \in \mathbb{R}^{n_t \times d}$  is defined such that rows corresponding to the indices in $\cO$ are set to 0, and all other rows are set to 1.
The vector $\boldsymbol{s}_j \in \mathbb{R}^{d}$ are the signs of the component-wise subtractions in (\ref{eq:test_statistic}).
Finally, the operator $\odot$ denotes the column-wise product, where each column of $\boldsymbol{s}_j^\top$ is multiplied element-wise with the corresponding column of the matrix it is applied to. 

\vspace{5pt}

Once the test statistic in (\ref{eq:test_statistic}) is computed, the next step is to compute the corresponding $p$-value. 
For a given significance level $\alpha \in [0, 1]$ (e.g., 0.05), we reject the null hypothesis $\text{H}_{0,j}$ and determine $X_j^t$ as an anomaly if the $p$-value is less than or equal to $\alpha$. Otherwise, if the $p$-value exceeds $\alpha$, we conclude that the evidence is insufficient to conclude that $X_j^t$ is an anomaly.

\subsection{Challenge of computing a valid $p$-value}
The traditional (naive) $p$-value, which does not properly account for the effects of RL-based DA and AE-based AD, is defined as:
\begin{align*}
	p_j^{\text{naive}} = \mathbb{P}_{\text{H}_{0,j}}
	\Bigg( 
	\left| \boldsymbol{\eta}_j^\top {\rm vec} \begin{pmatrix} X^s \\ X^t \end{pmatrix} \right| \geq \left| \boldsymbol{\eta}_j^\top {\rm vec} \begin{pmatrix} X^s_{\rm obs} \\ X^t_{\text{obs}} \end{pmatrix} \right| 
	\Bigg),
\end{align*}
where $X^s_{\text{obs}}$ and $X^t_{\text{obs}}$ are the observations (realizations) of the random matrices $X^s$ and $X^t$, respectively. If the vector $\boldsymbol{\eta}_j$ is independent of the RL-based DA and AE-based AD algorithms, the naive $p$-value is valid in the sense that
\begin{equation}
    \mathbb{P}
    \Big(\underbrace{
    p_j^{\text{naive}} \leq \alpha \mid \text{H}_{0,j} \text{ is true}
    }_{\text{a false positive}}
    \Big)
    = \alpha, \quad \forall \alpha \in [0,1].
\end{equation}
That is, the probability of obtaining a false positive is controlled under a specified level of confidence. However, in our setting, the vector $\boldsymbol{\eta}_j$ inherently depends on both the DA and AD processes. As a result, the validity condition of the $p$-value in (4) no longer holds, rendering the naive $p$-value \textit{invalid}.

\section{Proposed Method} \label{sec:proposed_method}

This section details our proposed STAND-DA method for computing valid $p$-values for the detected anomalies.

\subsection{The valid $p$-value in STAND-DA} \label{subsec:valid_p_value}

To compute valid $p$-values for the detected anomalies, we need to characterize the distribution of the test statistic defined in \eqref{eq:test_statistic}. To this end, we utilize the concept of SI, i.e., we consider the distribution of the test statistic conditional on both the AE-based AD results after RL-based DA and the signs of the subtractions involved in \eqref{eq:test_statistic}.
\begin{align} \label{eq:conditional_distribution}
	\mathbb{P} \Bigg ( 
	\bm \eta_j^\top {\rm vec}{X^s \choose X^t }
	~
	\Big |
	~ 
	\cO_{X^s, X^t}
	=
	\cO_{\rm obs},
    ~
    \cS_{ X^s,  X^t}
    =
    \cS_{\rm obs}
	\Bigg ),
\end{align}
where $\cO_{X^s, X^t}$ and $\cS_{X^s, X^t}$ respectively denote the AD results after DA and the set of all signs of the subtractions in \eqref{eq:test_statistic} \emph{for any random} matrices $X^s$ and $X^t$, $\cO_{\rm obs} = \cO_{X^s_{\rm obs}, X^t_{\rm obs}}$, and $\cS_{\rm obs} = \cS_{X^s_{\rm obs}, X^t_{\rm obs}}$.
Based on the distribution in \eqref{eq:conditional_distribution}, we define the selective $p$-value as follows:
\begin{align} \label{eq:selective_p}
	p^{\rm selective}_j = 
	\mathbb{P}_{\rm H_{0, j}} 
	\Bigg ( 
		\left | \bm \eta_j^\top {\rm vec}{X^s \choose X^t } \right |
		\geq 
		\left | \bm \eta_j^\top {\rm vec}{X^s_{\rm obs} \choose X^t_{\rm obs} } \right |
		~
		\Bigg | 
		~
		\cC
	\Bigg ), 
\end{align}
where $\cC$ denotes the conditioning event, defined as:
\begin{align} \label{eq:condition_event}
    \cC = \Bigg \{ 
    \cO_{X^s, X^t}
    =
    \cO_{\rm obs}, ~
    \cS_{X^s, X^t}
    =
    \cS_{\rm obs}, ~
    \cQ_{X^s, X^t}
    =
    \cQ_{\rm obs}
    \Bigg \}.
\end{align}
Here, $\cQ_{X^s, X^t}$ denotes the \emph{nuisance component}, defined as:
\begin{align} \label{eq:nuisance_component}
    \cQ_{X^s, X^t} = 
    \big (
    I_{n_s + n_t}
    - 
    \bm b
    \bm \eta_j^\top
    \big )
    ~ 
    {\rm vec}{X^s \choose X^t },
\end{align}
where $\bm b = \tfrac{\Sigma \bm \eta_j}{\bm \eta_j^\top \Sigma \bm \eta_j}$ with 
$\Sigma = 
\begin{pmatrix}
\Sigma^s & 0 \\
0 & \Sigma^t
\end{pmatrix}
$.

\begin{remark}
The nuisance component $\cQ_{X^s, X^t}$ corresponds to the component $\bm z$ introduced in the the seminal paper of \cite{lee2016exact} (see Sec. 5, Eq. (5.2), and Theorem 5.2). We note that 
conditioning additionally on the nuisance component---though required for a technical reason---is a common practice in the conditional SI literature and is employed in in almost all SI-related studies we reference.
\end{remark}

\begin{lemma}
    \label{lem:valid_p_value}
    The selective $p$-value in \eqref{eq:selective_p} satisfies the property of a valid $p$-value:
    \begin{align*} 
        \mathbb{P}_{H_{0,j}} \left(
        p_j^{\text{selective}} \leq \alpha
        \right)
        =
        \alpha,
        \quad \forall \alpha \in [0,1].
    \end{align*}
    \begin{proof}
        The proof is deffered to the \textit{Appendix} \ref{appx:proof_valid_selective_p}.
    \end{proof}
\end{lemma}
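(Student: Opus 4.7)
The plan is to invoke the standard conditional selective inference pivot: show that, under $\mathrm{H}_{0,j}$, the selective $p$-value is the two-sided tail probability of the test statistic with respect to a one-dimensional truncated-Gaussian conditional distribution, and then apply the probability integral transform to conclude uniformity.

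First, I would exploit the Gaussian model ${\rm vec}{X^s \choose X^t} \sim \NN({\rm vec}{M^s \choose M^t}, \Sigma)$ together with the decomposition
\begin{equation*}
{\rm vec}{X^s \choose X^t} = \bm{b}\,\bm{\eta}_j^\top {\rm vec}{X^s \choose X^t} + \cQ_{X^s, X^t},
\end{equation*}
which follows directly from the choice $\bm{b} = \Sigma \bm{\eta}_j / (\bm{\eta}_j^\top \Sigma \bm{\eta}_j)$ and the projector definition in \eqref{eq:nuisance_component}. Under Gaussianity, the scalar $\bm{\eta}_j^\top {\rm vec}{X^s \choose X^t}$ and the nuisance $\cQ_{X^s, X^t}$ are uncorrelated, and therefore independent, so conditioning on $\cQ_{X^s, X^t} = \cQ_{\rm obs}$ pins down the entire $(n_s + n_t)d$-dimensional vector up to a single Gaussian degree of freedom along $\bm{\eta}_j$. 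The additional conditioning on $\{\cO_{X^s, X^t} = \cO_{\rm obs}, \cS_{X^s, X^t} = \cS_{\rm obs}\}$ then restricts this scalar to a measurable subset of $\mathbb{R}$ (to be described in later sections as a union of intervals). Consequently, conditional on the event $\cC$, the statistic $\bm{\eta}_j^\top {\rm vec}{X^s \choose X^t}$ follows a truncated Gaussian with variance $\bm{\eta}_j^\top \Sigma \bm{\eta}_j$, mean $\bm{\eta}_j^\top {\rm vec}{M^s \choose M^t}$, and known truncation set.

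Next, I would verify that the null mean vanishes. Because $\bm{\eta}_j$ is zero on the source block and, after conditioning on $\cS_{\rm obs}$, acts on the target block through the fixed linear combination $\bm{s}_j^\top \odot (E_j^t - (n_t - |\cO|)^{-1} E_{\cO^c}^t)$, the hypothesis $\mathrm{H}_{0,j}: M^t_{j,k} = \bar{M}^t_{\cO^c, k}$ for all $k$ directly forces $\bm{\eta}_j^\top {\rm vec}{M^s \choose M^t} = 0$. Hence the selective $p$-value in \eqref{eq:selective_p} is the two-sided tail probability of a mean-zero truncated Gaussian evaluated at its own realization, which is uniform on $[0,1]$ by the probability integral transform. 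Marginalizing over $\cC$ via the tower property,
\begin{equation*}
\mathbb{P}_{\mathrm{H}_{0,j}}\!\big(p_j^{\rm selective} \leq \alpha\big) = \mathbb{E}\!\big[\mathbb{P}_{\mathrm{H}_{0,j}}(p_j^{\rm selective} \leq \alpha \mid \cC)\big] = \alpha,
\end{equation*}
which is the claim.

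The main obstacle will be the first step---rigorously justifying that conditioning on $\cQ_{X^s, X^t}$ reduces the full $(n_s + n_t)d$-dimensional randomness to a single scalar along $\bm{\eta}_j$, and that the induced conditional law on this scalar is truncated Gaussian with an explicitly describable truncation set (whose explicit characterization through the RL extractor and AE reconstruction layers is the core computational burden of later sections). Once this linear-algebraic reduction---the standard device of \cite{lee2016exact} flagged in Remark 1---is in place, the remainder of the argument reduces to the probability integral transform and the tower property, both of which are routine.
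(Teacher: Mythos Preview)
Your proposal is correct and follows essentially the same route as the paper's proof: establish that the test statistic, conditional on $\cC$, is a truncated Gaussian with mean zero under $\mathrm{H}_{0,j}$, invoke the probability integral transform to obtain conditional uniformity, and then marginalize via the tower property. The paper's version is slightly more explicit in breaking the marginalization into two stages (first over $\cQ_{\rm obs}$, then over $\cO_{\rm obs}$ and $\cS_{\rm obs}$), while you supply more justification for why the truncated-normal conditional law arises (via independence of the statistic and the nuisance) and why the null mean vanishes---but the substance is the same.
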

Lemma \ref{lem:valid_p_value} shows that the selective $p$-value defined in \eqref{eq:selective_p} allows us to control the FPR at the significant level $\alpha$. To calculate this selective $p$-value, we must characterize the conditioning event $\cC$ in \eqref{eq:condition_event}, which will be detailed in the following section.

\subsection{Characterization of the Conditioning Event $\cC$ in \eqref{eq:condition_event}} \label{subsec:characterization_conditioning_event}
We define the set of ${\rm vec}{X^s \choose X^t} \in \RR^{(n_s + n_t) d}$ that satisfies the conditions in \eqref{eq:condition_event} as:
\begin{align} \label{eq:condition_event_set}
    \mathcal{D} 
    = 
    \left \{
    {\rm vec}{X^s \choose X^t} ~
    \Big | ~
    \cO_{X^s, X^t}
    =
    \cO_{\rm obs}, ~ 
    \cS_{X^s, X^t}
    =
    \cS_{\rm obs}, ~ 
    \cQ_{X^s, X^t}
    =
    \cQ_{\rm obs}
    \right \}.
\end{align}
As stated in the following lemma, the conditional data space $\mathcal{D}$ corresponds to a line in $\mathbb{R}^{(n_s + n_t)d}$.
\begin{lemma} \label{lem:data_line}
    The set $\mathcal{D}$ in \eqref{eq:condition_event_set} can be re-parameterized using a scalar parameter $z \in \mathbb{R}$ as follows:
    \begin{align} \label{eq:condition_event_set_line}
        \mathcal{D} 
        = 
        \left \{
        {\rm vec}{X^s \choose X^t}
        =
        \bm a + \bm b z ~
        \Big | ~
        z \in \cZ
        \right \},
    \end{align}
    where $\bm a = \cQ_{\rm obs}$, $\bm b$ is defined in \eqref{eq:nuisance_component}, and
    \begin{align} \label{eq:cZ}
        \cZ = 
        \big \{
        z \in \mathbb{R}
        \mid 
        \cO_{\bm a + \bm b z}
        =
        \cO_{\rm obs}, ~
        \cS_{\bm a + \bm b z}
        =
        \cS_{\rm obs}
        \big \}.
    \end{align}
    Here, by a slight abuse of notation, $\cO_{\bm a + \bm b z}$ is equivalent to $\cO_{X^s, X^t}$. This equivalence similarly holds for $\cS_{\bm a + \bm b z}$.
    \begin{proof}
        The proof is deferred to the Appendix \ref{appx:proof_data_line}.
    \end{proof}
\end{lemma}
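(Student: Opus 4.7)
The plan is to show that the nuisance condition $\cQ_{X^s, X^t} = \cQ_{\rm obs}$ by itself pins ${\rm vec}{X^s \choose X^t}$ onto a one-dimensional affine subspace, and that the two remaining conditions on $\cO$ and $\cS$ simply carve out the admissible values of the scalar parameter $z$ that constitute $\cZ$. The key algebraic fact I will exploit is $\bm \eta_j^\top \bm b = 1$, which follows immediately from the definition $\bm b = \Sigma \bm \eta_j / (\bm \eta_j^\top \Sigma \bm \eta_j)$ and makes $\bm b \bm \eta_j^\top$ a rank-one (oblique) projection onto $\mathrm{span}(\bm b)$ along $\mathrm{ker}(\bm \eta_j^\top)$.

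For the forward inclusion, I would apply the trivial decomposition $I = (I - \bm b \bm \eta_j^\top) + \bm b \bm \eta_j^\top$ to ${\rm vec}{X^s \choose X^t}$, which gives
\begin{align*}
{\rm vec}{X^s \choose X^t}
\;=\; \cQ_{X^s, X^t} \;+\; \bm b \left( \bm \eta_j^\top {\rm vec}{X^s \choose X^t} \right).
\end{align*}
Imposing the nuisance constraint $\cQ_{X^s, X^t} = \bm a$ and defining the scalar $z := \bm \eta_j^\top {\rm vec}{X^s \choose X^t}$ then immediately yields ${\rm vec}{X^s \choose X^t} = \bm a + \bm b z$, so every element of $\cD$ lies on the claimed line.

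For the reverse inclusion, I would substitute ${\rm vec}{X^s \choose X^t} = \bm a + \bm b z$ back into the definition of $\cQ_{X^s, X^t}$ and use $\bm \eta_j^\top \bm b = 1$ together with $\bm \eta_j^\top \bm a = 0$---the latter itself a one-line consequence of $\bm a = (I - \bm b \bm \eta_j^\top){\rm vec}{X^s_{\rm obs} \choose X^t_{\rm obs}}$ and $\bm \eta_j^\top \bm b = 1$---to verify that the nuisance condition holds identically for every $z \in \RR$. The $\cO$ and $\cS$ conditions then translate verbatim into the defining conditions of $\cZ$ in \eqref{eq:cZ}, completing both inclusions and establishing \eqref{eq:condition_event_set_line}.

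I do not expect any genuine obstacle in Lemma \ref{lem:data_line} itself: this is the by-now-standard ``conditioning on the nuisance component yields a line'' reduction pioneered by \cite{lee2016exact} and recycled throughout the conditional SI literature. The only point worth checking carefully is $\bm \eta_j^\top \bm a = 0$, which guarantees that the parametrization $z = \bm \eta_j^\top (\bm a + \bm b z)$ is consistent and unambiguous, so that the map $z \mapsto \bm a + \bm b z$ is a bijection between $\RR$ and the affine line. The genuinely hard part of the overall framework---the explicit characterization of $\cZ$ via the selection events induced by the trained RL-based DA network and the AE-based AD rule---is a separate task that Lemma \ref{lem:data_line} merely reduces to.
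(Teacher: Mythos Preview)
Your proposal is correct and follows essentially the same approach as the paper: the paper's proof also starts from the nuisance condition $\cQ_{X^s,X^t}=\cQ_{\rm obs}$, rewrites it as ${\rm vec}{X^s\choose X^t}=\cQ_{\rm obs}+\bm b\,\bm\eta_j^\top{\rm vec}{X^s\choose X^t}$, and then sets $\bm a=\cQ_{\rm obs}$ and $z=\bm\eta_j^\top{\rm vec}{X^s\choose X^t}$ before absorbing the $\cO$ and $\cS$ conditions into $\cZ$. Your treatment is slightly more careful in that you explicitly argue both inclusions and record the identities $\bm\eta_j^\top\bm b=1$ and $\bm\eta_j^\top\bm a=0$, but the underlying idea is identical.
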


\begin{remark}
Lemma \ref{lem:data_line} demonstrates that it is unnecessary to work in the full $(n_s + n_t)d$-dimensional data space. Instead, we can project the data onto a one-dimensional space $\cZ$ as defined in \eqref{eq:cZ}. This reduction to a line has been implicitly utilized in \cite{lee2016exact} and explicitly addressed in Sec. 5 of \cite{liu2018more}.
\end{remark}

\textbf{Reformulation of the selective $p$-value in \eqref{eq:selective_p}.}
Consider a random variable along with its observed value:
\begin{align} \label{eq:random_variable}
    Z = 
    \bm \eta_j^\top {\rm vec}{X^s \choose X^t }
    \in 
    \mathbb{R}
    \quad
    \text{and}
    \quad 
    Z_{\rm obs} = 
    \bm \eta_j^\top {\rm vec}{X^s_{\rm obs} \choose X^t_{\rm obs} }
    \in 
    \mathbb{R},
\end{align}
the selective $p$-value in \eqref{eq:selective_p} can then be rewritten as:
\begin{align} \label{eq:valid_p_value_reformulated}
    p_j^{\text{selective}} = 
    \mathbb{P}_{\rm H_{0,j}} \Big ( 
    \left | Z \right |
    \geq
    \left | Z_{\rm obs} \right |
    \mid
    Z \in \cZ
    \Big ).
\end{align}
Once the truncation region $\cZ$ is known, calculating the selective $p$-value in \eqref{eq:valid_p_value_reformulated} becomes straightforward. 
Consequently, the remaining crucial task is to identify the truncation region $\cZ$.

\begin{figure*}[!t]
    \centering
    \includegraphics[width=.9\textwidth]{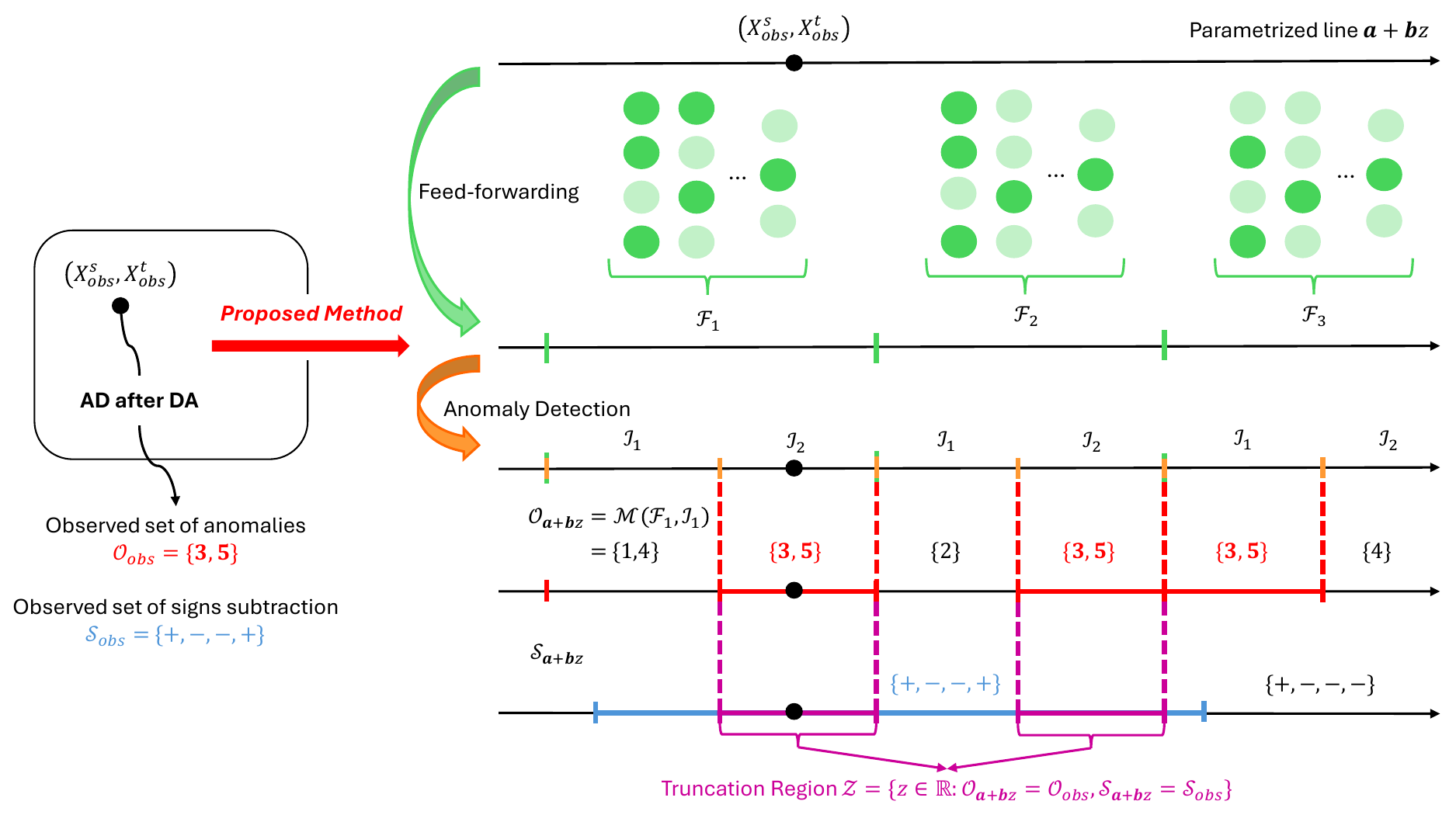}
    \vspace{1pt}
    \caption{
    A schematic illustration of our proposed STAND-DA. 
    We begin by performing RL-based DA, followed by AD through an AE model.
    Next, we parametrize the data using a scalar parameter $z$ in the dimension of the test statistic to define the truncation region $\cZ$.
        This region consists of all data points that yield the \emph{same} AD results $\cO$ and share the \emph{same} sign-subtraction pattern $\cS$ as the observed data. 
            Finally, statistical inference is performed by conditioning on $\cZ$. To improve computational efficiency, we adopt a divide-and-conquer approach for efficiently identifying the $\cZ$.
     }
    \label{fig:line_search_approach}
    \vspace{-10pt}
\end{figure*}
    
\subsection{Indentification of Truncation Region $\cZ$} \label{subsec:truncation_region}

To compute the selective $p$-value, as discussed in the previous section, we must identify the truncation region $\cZ$. However, this region cannot be determined directly. To address this, we adopt a divide-and-conquer  (illustrated in Fig. \ref{fig:line_search_approach})---drawing inspiration from \cite{duy2022more}---to systematically identify $\cZ$ through the following steps:

\begin{itemize}
    \item 
    We decompose the overall problem into a series of sub-problems. These are conditioned not only on the detected anomalies and the signs of the subtractions but also on the activations of all activation functions within the feature extractor architecture in the RL-based DA and the AE, as well as all subsequent processing steps to obtain the final AD results.
  
    \item We prove that solving each sub-problem is computationally efficient.
    \item 
	We construct the truncation region $\cZ$ through the integration of solutions from multiple sub-problems.
\end{itemize}

\vspace{5pt}

\textbf{Divide-and-conquer strategy.}
To facilitate further decomposition, let us rewrite the truncation region $\cZ$ in \eqref{eq:cZ} as:
\begin{align*}
    \cZ
    =
    \big \{
        z \in \mathbb{R}
        \mid 
        \cO_{\bm a + \bm b z} = \cO_{\rm obs},
        \cS_{\bm a + \bm b z} = \cS_{\rm obs}
    \big \} 
    =
    \cZ_1 
    \cap 
    \cZ_2,
\end{align*}
where 
\begin{align} 
    \cZ_1
    = 
    \big \{
        z \in \mathbb{R}
        \mid
        \cO _{\bm a + \bm b z} = \cO_{\rm obs}
    \big \} \quad \text{and} \quad  
    \cZ_2 
    = 
    \big \{
        z \in \mathbb{R}
        \mid
        \cS _{\bm a + \bm b z} = \cS_{\rm obs}
    \big \} \label{eq:cZ_1_2}.
\end{align}
After the decomposition, we can identify the truncation regions $\cZ_1$ and $\cZ_2$ separately and combine them to obtain the final truncation region $\cZ$.

%Because $\cZ_2$ is in fact a continuous region (maybe proof??), which is: Exist only one pair of $(l, r)$ where 
%\begin{align*}
%    \begin{cases}
%    \forall z \in [l, r] ~ : ~ \cI_{\bm A + \bm B z} = \cI_{\rm obs} \\
%    \forall z \notin [l, r] ~ : ~ \cI_{\bm A + \bm B z} \neq \cI_{\rm obs}
%    \end{cases}
%\end{align*}
%The truncation region $\cZ_2$ is then straightforward to determine. The main task remaining here is to identify the truncation region $\cZ_1$.

\vspace{5pt}

To identify the truncation region $\cZ_1$, let us denote by $U$ the number of all possible states of the activation functions within the architecture of the feature extractor in the RL-based DA and the AE-based AD.
For each $u \in [U]$, let $V^u$ represent the number of all possible sequences of steps to identify the anomalies based on the reconstruction error of the AE.
The entire one-dimensional data space $\mathbb{R}$ can be decomposed as:
\begin{align*}
    \mathbb{R} 
    = 
    \bigcup \limits_{u \in [U]}
    \bigcup \limits_{v \in [V^u]}
    \underbrace{
    \left \{
    z \in \mathbb{R}
    ~
    \Bigg | 
    ~
    \begin{array}{l}
        \cF_{\bm a + \bm b z} = \cF_{u}, \\
        \cI_{\bm a + \bm b z} = \cI_{v} \\
    \end{array}
    \right \}
    }_{\text{A sub-problem of additional conditioning}},
\end{align*}
where $\cF_{\bm a + \bm b z}$ denotes the activation pattern of the activation functions during the feed-forward process through the feature extractor of the RL-based DA and the AE, and $\cI_{\bm a + \bm b z}$ represents the sequence of steps used to identify anomalies based on the reconstruction error of the AE.
Our goal is to find a set: 
\begin{align} \label{eq:cV}
    \cV = 
    \big \{
    (u, v)
    \mid
    \cM(\cF_{u}, \cI_{v})
    = 
    \cO_{\rm obs}
    \big \},
\end{align}
for all $u \in [U]$ and $v \in [V^u]$, the function $\cM$ is defined as:
\begin{align} \label{eq:cM}
    \cM : 
    (\cF_{\bm a + \bm b z}, \cI_{\bm a + \bm b z})
    \mapsto
    \cO_{\bm a + \bm b z}.
\end{align}
Afterward, we can obtain the $\cZ_1$ as:
\begin{align} \label{eq:z1_union}
    \cZ_1 
    = 
    \left \{
    z \in \mathbb{R}
    ~
    |
    ~
    \cO_{\bm a + \bm b z}
    =
    \cO_{\rm obs}
    \right \} 
    =
    \bigcup \limits_{(u, v) \in \cV}
    \big \{
    z \in \mathbb{R}
    \mid
    \cF_{\bm a + \bm b z} = \cF_{u},
    \cI_{\bm a + \bm b z} = \cI_{v}
    \big \}.
\end{align}
Finally, the truncation region $\cZ$ can be obtained as:
\begin{align} \label{eq:z_final}
    \cZ 
    &= 
    \cZ_1 
    \cap 
    \cZ_2 \nonumber \\
    &=
    \Big (\bigcup \limits_{(u, v) \in \cV}
    \big \{
    \underbrace{
    z \in \mathbb{R}
    \mid 
    \cF_{\bm a + \bm b z} = \cF_{u},
    \cI_{\bm a + \bm b z} = \cI_{v}
    }_{\text{a sub-problem}}
    \big \} \Big )
    \bigcap 
    \big \{
    \underbrace{
    z \in \mathbb{R}
    \mid
    \cS_{\bm a + \bm b z}
    =
    \cS_{\rm obs}
    }_{\text{a sub-problem}}
    \big \}.
\end{align}

%\blue{According to Algorithm 2, the second line should be:
%\begin{align*}
%    \Big (\bigcup \limits_{(u, v) \in \cV}
%    \big \{
%    \underbrace{
%    z \in \mathbb{R}
%    \mid 
%    \cF_{\bm a + \bm b z} = \cF_{u},
%    \cI_{\bm a + \bm b z} = \cI_{v}
%    }_{\text{a sub-problem}}
%    \big \} \Big )
%    \bigcap 
%    \big \{
%    \underbrace{
%    z \in \mathbb{R}
%    \mid
%    \cS_{\bm a + \bm b z}
%    =
%    \cS_{\rm obs}
%    }_{\text{a sub-problem}}
%    \big \}.
%\end{align*}}

\vspace{5pt}
\textbf{Solving each sub-problem.}
For any $u \in [U]$ and $v \in [V^u]$, we define the subset of the one-dimensional projection of the dataset onto a line corresponding to the sub-problem in $\cZ_1$ as follows:
\begin{align} \label{eq:extra_conditioning}
    \cZ_{u, v}
    = 
    \big \{
    z \in \mathbb{R}
    ~
    |
    ~
    \cF_{\bm a + \bm b z} = \cF_{u},
    \cI_{\bm a + \bm b z} = \cI_{v}
    \big \}.
\end{align}
The $\cZ_{u, v}$ can be re-written as $\cZ_{u, v}  = \cZ_u \cap \cZ_v$ in which 
\begin{align*}
	\cZ_u  = 
	\big \{ 
	z \in \RR
	\mid 
	\cF_{\bm a + \bm b z} &= \cF_u
	\big \} 
	\quad
	\text{and}
	\quad
	\cZ_v = 
	\big \{ 
	z \in \RR
	\mid 
	\cI_{\bm a + \bm b z} = \cI_v
	\big \}.
\end{align*}

\vspace{5pt}

\begin{lemma} \label{lem:activation_function}

Consider the broad class of piecewise-linear neural networks. The set $\mathcal{Z}_u$ can be characterized by a collection of linear inequalities with respect to $z$:
\[
\mathcal{Z}_u = \left\{ z \in \mathbb{R} \;\middle|\; \bm{p}z \leq \bm{q} \right\},
\]
% where the vectors $\bm{p}$ and $\bm{q}$ are defined in Appendix~\ref{appx:proof_activation_function}. 
where the vectors $\bm{p}$ and $\bm{q}$ are defined are defined in the following proof.
\end{lemma}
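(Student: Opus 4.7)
The approach rests on the standard observation that piecewise-linear networks---for example, those built with ReLU, Leaky ReLU, or absolute-value activations---partition their input space into polyhedral regions on each of which the network acts as an affine map. Since the data enters the pipeline along the one-dimensional line ${\rm vec}\binom{X^s}{X^t} = \bm a + \bm b z$ from \eqref{eq:condition_event_set_line}, the induced subdivision of the $z$-axis is again polyhedral, so $\cZ_u$ must be a finite intersection of half-lines, i.e., a system of linear inequalities in $z$.

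The plan is to argue by induction on the layer index $\ell$ of the composed network (the feature extractor $f_{\rm extractor}$ of the RL-based DA followed by the AE). At layer $0$, the input $\bm h^{(0)}(z) = \bm a + \bm b z$ is affine in $z$. Assuming inductively that the output of layer $\ell - 1$ is of the affine form $\bm h^{(\ell-1)}(z) = \bm \alpha^{(\ell-1)} + \bm \beta^{(\ell-1)} z$, the pre-activation of layer $\ell$ satisfies $\bm w^{(\ell)}(z) = W^{(\ell)} \bm h^{(\ell-1)}(z) + \bm c^{(\ell)}$, which is again affine in $z$ with updated coefficients $\bm \alpha^{(\ell)}, \bm \beta^{(\ell)}$ obtained by a linear transformation of $(\bm \alpha^{(\ell-1)}, \bm \beta^{(\ell-1)})$ by the layer weights and biases. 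Conditional on $\cF_u$ fixing each activation to a particular linear piece, every activation then acts as a prescribed affine (in many cases purely linear) function of its pre-activation, so the post-activation $\bm h^{(\ell)}(z)$ is again affine in $z$, closing the induction.

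The conditioning $\cF_{\bm a + \bm b z} = \cF_u$ is then equivalent to the conjunction, over every activation neuron $k$ at every layer $\ell$, of the inequality on the pre-activation $w^{(\ell)}_k(z) = \alpha^{(\ell)}_k + \beta^{(\ell)}_k z$ that selects its assigned piece. For a ReLU, the ``active'' branch gives $-\beta^{(\ell)}_k z \leq \alpha^{(\ell)}_k$ and the ``inactive'' branch gives $\beta^{(\ell)}_k z \leq -\alpha^{(\ell)}_k$; analogous scalar linear constraints arise for any other piecewise-linear activation. Stacking one such inequality for every neuron across all layers of both the extractor and the AE yields the desired system $\bm p z \leq \bm q$, where each row of $\bm p$ is the appropriately signed slope $\beta^{(\ell)}_k$ and the corresponding entry of $\bm q$ is the appropriately signed intercept $\alpha^{(\ell)}_k$, both built recursively from the layer parameters and the piece indices recorded in $\cF_u$.

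The main obstacle is not conceptual but organizational: one must track the recursion $(\bm \alpha^{(\ell)}, \bm \beta^{(\ell)})$ explicitly so that the rows of $\bm p$ and $\bm q$ can be written down unambiguously, and one should also treat with care the boundary events where some $w^{(\ell)}_k(z)$ vanishes (a measure-zero event under the Gaussian noise that does not affect selective probabilities but should be acknowledged in the formal statement). Presenting the recursion explicitly for the ReLU case---where the active/inactive pattern is encoded by a diagonal $0/1$ mask applied to $W^{(\ell)}$---and remarking that general piecewise-linear activations follow the same template with their respective slopes and intercepts should produce a clean closed-form definition of $\bm p$ and $\bm q$.
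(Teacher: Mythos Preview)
Your proposal is correct and follows essentially the same approach as the paper: both argue by induction on the layer index, propagate the affine parameterization $(\bm\alpha^{(\ell)},\bm\beta^{(\ell)})$ (the paper writes these as matrices $A_{(l)},B_{(l)}$ since all $n_s+n_t$ instances are processed simultaneously) through the linear layers and the fixed activation masks, and then stack the resulting sign constraints on the pre-activations to obtain $\bm p z \le \bm q$. The paper's version is slightly more explicit in that it writes out the recursion and the final $\bm p_{(l)},\bm q_{(l)}$ in closed form for the ReLU case, which is exactly what you flag as the remaining organizational step.
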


\begin{proof}
Consider a deep neural network with \(L\) layers. In each layer \(l\), a linear transformation is followed by an element-wise ReLU activation. Denoting by \(d_l\) the number of units in layer \(l\), the layer output is
\begin{align} \label{eq:X_l}
X_{(l)} = \operatorname{ReLU} \left(X_{(l-1)} W_{(l)} + \mathbf{1}_n\,\mathbf{c}_{(l)}^\top \right),
\end{align}
where \( X_{(0)} = \begin{pmatrix} X^s \\ X^t \end{pmatrix} \), \( X_{(l)} \in \mathbb{R}^{n \times d_l} \) is the output to the \( l \)-th layer, \( W_{(l)} \in \mathbb{R}^{d_{l-1} \times d_l} \) is the weight matrix, and \( \bm{c}_{(l)} \in \mathbb{R}^{d_l} \) is the bias vector. The vector \( \bm{1}_n \in \mathbb{R}^n \) is an \( n \)-dimensional column vector of ones, and the ReLU function is applied element-wise.
The activation pattern on the $l$-th layer can be characterized by the following set:
\begin{align}
\cF_{(l)} =  \text{sign}\left( \bm{X}_{(l-1)} \mathcal{W}_{(l)} + \bm{1}_n \bm c_{(l)}^\top \right).
\label{F_l}
\end{align}
Then, the entire activation patterns induced by the forward propagation process can be represented as
\begin{align*}
        \mathcal{F}_u = \bigcap_{l=1}^{L} \mathcal{F}_{(l)}.
\end{align*}
Regarding the selection event inequalities associated with $\mathcal{F}_{(l)}$ w.r.t. $
\begin{pmatrix} X^s \\ X^t \end{pmatrix} = A + Bz,
$
where $A = \operatorname{matrix}(\bm{a})$ and $B = \operatorname{matrix}(\bm{b})$, and $\operatorname{matrix}(\cdot)$ denotes the operator that reshapes a vector into a matrix of size $(n_s + n_t) \times d$,  
we can express as:
\begin{align}
        & F_{(l)} \circ \left(B_{(l-1)} W_{(l)} z + A_{(l-1)}W_{(l)}  + \mathbf{1}_n \bm c_{(l)}^\top \right) \geq 0
        \label{F_l_ineq}
        \\
        \Leftrightarrow\ 
        & F_{(l)} \circ \left ( - B_{(l-1)} W_{(l)}  z \right ) \leq F_{(l)} \circ \left(  A_{(l-1)} W_{(l)} + \mathbf{1}_n \bm c_l^\top \right). \nonumber
\end{align}
Here, the operator $\circ$ is element-wise product, $A_{(l)} \in \mathbb{R}^{(n_s + n_t) \times d_{l}}$ and $B_{(l)} \in \mathbb{R}^{(n_s + n_t) \times d_{l}}$, which are associated with the data $A_{(l)} + B_{(l)} z$ after being propagated through the $l$-th layer, can be computed iteratively as follows:
    \begin{align}
       \left [A_{(l)} \right ]_{ij} &= 
        \begin{cases}
            \left [A_{(l-1)} {W}_{(l)} + \mathbf{1}_n \bm c_{(l)}^\top \right ]_{ij},  
            &\text{if } \left [X_{(l)} \right ]_{ij} > 0, \\
            0, & \text{otherwise.}
        \end{cases}
        \label{A_l}
    \\
        \left [B_{(l)} \right ]_{ij} &= 
        \begin{cases}
            \left [B_{(l-1)} {W}_{(l)} \right ]_{ij}, & 
            \text{if } 
            \left [X_{(l)} \right ]_{ij} > 0, \\
            0, & \text{otherwise.}
        \end{cases}
        \label{B_l}
    \end{align}
    for all $i \in [n_s+n_t]$ and $j \in [d_l]$.
    Finally, the vectors $\bm{p}$ and $\bm{q}$ can be obtained as follows:
    \begin{align*}
        \bm p
        =
        \begin{pmatrix}
            \bm p_{(1)} \\
            \bm p_{(2)} \\
            \vdots \\
            \bm p_{(L)}
        \end{pmatrix} 
        \quad 
        \text{and}
        \quad 
        \bm q
        =
        \begin{pmatrix}
            \bm q_{(1)} \\
            \bm q_{(2)} \\
            \vdots \\
            \bm q_{(L)}
        \end{pmatrix},
    \end{align*}
    where 
    \begin{equation}
        \bm p_{(l)} = {\rm vec} \left(F_{(l)} \circ (-{B}_{(l-1)} {W}_{(l)}  )\right),
        \label{p_l}
    \end{equation}
    \begin{equation}
        \bm q_{(l)} = {\rm vec} \left(F_{(l)}  \circ \left(  {A}_{(l-1)} {W}_{(l)} + \mathbf{1}_n \bm c_l^\top \right) \right).
        \label{q_l}
    \end{equation}    
Thus, the set $\cZ_u$  can be identified by solving a system of linear inequalities w.r.t $z$.
\end{proof}

\vspace{5pt}

The inequalities in Lemma \ref{lem:activation_function} can be derived iteratively by propagating the input ${X^s \choose X^t }$ through each layer of the feature extractor in the RL-based DA model and the AE, recording the linear constraints imposed by each activation function. This is possible because deep neural networks, typically composed of piecewise-linear activations such as ReLU, effectively accumulate linear transformations layer by layer. In the case of non-piecewise-linear activation functions, such as tanh or sigmoid, a similar discussion applies by considering their piecewise-linear approximations. 
    
%    \begin{algorithm}[!t]
%    \caption{\texttt{get\_dnn\_interval} (\red{NOT CHECK YET})} \label{get_dnn_interval}
%    \begin{algorithmic}[1]
%        \Require $\bm A, \bm B, \bm X, L, \mathcal{W}, \mathcal{B}$
%        \State $\bm X_0 \gets \bm X, \bm A_0 \gets \bm A, \bm B_0 \gets \bm B, itv \gets [-\infty, \infty]$
%        \For {$l = 1$ to $layers$}
%            \State $\bm X_l \gets Eq. \eqref{X_l}$
%            \State $\bm A_l \gets Eq. \eqref{A_l}$
%            \State $\bm B_l \gets Eq. \eqref{B_l}$
%            \State $ p_l \gets Eq. \eqref{p_l}$
%            \State $ q_l \gets Eq. \eqref{q_l}$
%            \State Solve: $p_l z \leq q_l \rightarrow sub-itv$
%            \State $itv \gets itv \cap sub-itv$
%        \EndFor
%        \Ensure $itv$
%    \end{algorithmic}
%    \end{algorithm}
%    The process can be summarized in Algorithm \ref{get_dnn_interval}.

\begin{lemma} \label{lem:ad_algorithm}
    The set $\cZ_v$, which represents sequence of steps used to identify anomalies based on the $\ell_1$ reconstruction error, can be characterized by a set of linear inequalities w.r.t. $z$:
    \begin{align*}
        \cZ_v 
        =
        \big \{
        z \in \mathbb{R}
        \mid
        \bm rz 
        \leq
        \bm t
        \big \},
    \end{align*}
    where vectors $\bm r$ and $\bm t$ are defined in Appendix \ref{appx:proof_ad_algorithm}.
\end{lemma}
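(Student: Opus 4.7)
The plan is to reduce the anomaly-selection step to a finite collection of affine inequalities in $z$ by piggy-backing on Lemma~\ref{lem:activation_function}. I would first note that, restricted to $\cZ_u$, every internal pre-activation of the feature extractor and of the autoencoder is an affine function of $z$ (this is exactly what the iterative formulas \eqref{A_l}--\eqref{B_l} give for the networks themselves and what an analogous recursion gives for the decoder of the AE). Consequently, the reconstruction $\hat X^t_{i,k}$ and therefore the residual $\tilde X^t_{i,k} - \hat X^t_{i,k}$ can be written in the form $\xi_{i,k} + \beta_{i,k}\,z$, where the coefficients $\xi_{i,k},\beta_{i,k}$ depend only on $u$.

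Next I would address the absolute value in the $\ell_1$ reconstruction error $e_i(z) = \sum_{k\in[d]} |\xi_{i,k}+\beta_{i,k} z|$, which is piecewise linear but not linear. The idea is to bundle into $\cI_v$, in addition to the ranking information, the sign pattern $\sigma_{i,k}\in\{-1,+1\}$ of every residual $\xi_{i,k}+\beta_{i,k} z$. On the region where these signs agree with the observed ones, each $|\cdot|$ unfolds as $\sigma_{i,k}(\xi_{i,k}+\beta_{i,k} z)$, which contributes a family of linear inequalities of the form $-\sigma_{i,k}(\xi_{i,k}+\beta_{i,k} z)\le 0$, and makes $e_i(z)=\alpha_i + \gamma_i z$ affine in $z$ with explicitly computable coefficients $\alpha_i=\sum_k \sigma_{i,k}\xi_{i,k}$ and $\gamma_i=\sum_k \sigma_{i,k}\beta_{i,k}$.

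Once every reconstruction error is affine in $z$, the top-$5\%$ rule in the observed problem is equivalent to the ranking $\cO_{\rm obs}$ vs.\ $[n_t]\setminus\cO_{\rm obs}$ being preserved, i.e.\ $e_i(z)\ge e_j(z)$ for every $i\in\cO_{\rm obs}$ and every $j\in[n_t]\setminus\cO_{\rm obs}$. Each such comparison rearranges to $(\gamma_j-\gamma_i)\,z\le \alpha_i-\alpha_j$, a single linear inequality in $z$. Stacking these $|\cO_{\rm obs}|\cdot(n_t-|\cO_{\rm obs}|)$ ranking inequalities together with the $2 n_t d$ sign inequalities from the previous paragraph yields the desired representation $\cZ_v=\{z\in\mathbb{R}\mid \bm r z\le \bm t\}$, with $\bm r$ and $\bm t$ being the concatenation of the coefficient vectors above.

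The step I expect to be the most delicate is the bookkeeping that puts the sign pattern of the residuals into $\cI_v$ in a way that stays compatible with the decomposition in \eqref{eq:z_final}: one has to make sure that (i) the same signs really do reproduce the observed ordering $\cO_{\rm obs}$ under $\cM$ in \eqref{eq:cM}, and (ii) the additional conditioning on these signs can be later marginalized out by the union in \eqref{eq:z1_union}, so that no validity is lost relative to $\cZ_1$. Everything beyond that reduces to routine affine algebra analogous to \eqref{p_l}--\eqref{q_l}, which I would not write out in detail.
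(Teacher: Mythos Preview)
Your proposal follows the same two-stage strategy as the paper: (i) include the signs of the per-coordinate residuals $\tilde X_{i,k}(z)-\hat X_{i,k}(z)$ in $\cI_v$ so that each $\ell_1$ reconstruction error becomes affine in $z$, and (ii) encode the top-$5\%$ selection as a system of linear comparisons between these affine errors. The paper does exactly this, naming the sign events $\cI^{a_i}$ and the above/below-threshold partition $\cI^b,\cI^c$; your $\sigma_{i,k}$ and ranking inequalities are the same objects under different notation.

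There is one substantive difference in how step~(ii) is encoded. You impose all pairwise comparisons $e_i(z)\ge e_j(z)$ between anomalies and non-anomalies, whereas the paper fixes the single sample sitting at the percentile threshold as a pivot and compares every other sample to it. Your encoding yields a somewhat coarser sub-event (only the partition is fixed, not which sample realizes the threshold), which is harmless and arguably preferable once the union in \eqref{eq:z1_union} is taken; the paper's pivot encoding produces fewer inequalities per piece. Both are valid linear descriptions.

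One point does need fixing. In the paper's setup the AE is applied to the concatenated representations of \emph{both} domains, the reconstruction errors are ranked over all $n_s+n_t$ samples, and $\cO\subset[n_t]$ is then read off from the target rows (see \eqref{eq:detected_anomalies} and the appendix, where $\cR(z)$ is indexed by $i\in[n_s+n_t]$). Your inequalities are written over $[n_t]$ only, so a source sample crossing the percentile threshold as $z$ varies would change $\cO_{\bm a+\bm b z}$ without violating any of your constraints. Extending the sign conditioning and the ranking comparisons to all $i\in[n_s+n_t]$ (and replacing the input dimension $d$ by the feature dimension $d'$, and $2n_td$ by $(n_s+n_t)d'$ sign constraints) closes this gap; the argument is otherwise complete.
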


\begin{lemma} \label{lem:test_statistic_calculation}
    The set $\cZ_{2}$ can be characterized by a set of linear inequalities w.r.t. $z$:
    \begin{align*}
        \cZ_{2} 
        =
        \big \{
        z \in \mathbb{R}
        \mid
        \bm wz 
        \leq
        \bm o
        \big \},
    \end{align*}
    where vectors $\bm w$ and $\bm o$ are defined in Appendix \ref{appx:proof_test_statistic_calculation}.
\end{lemma}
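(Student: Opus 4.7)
The plan is to directly translate each sign condition in $\cS_{\rm obs}$ into a single linear inequality in $z$, using the line parametrization $\bm a + \bm b z$ from Lemma \ref{lem:data_line}. First I would reshape $\bm a$ and $\bm b$ into matrices $A, B \in \RR^{(n_s + n_t) \times d}$ and split each into source and target blocks $A^s, A^t, B^s, B^t$. A key preliminary observation is that $\bm \eta_j$ has zero entries on all source coordinates and $\Sigma$ is block-diagonal across source/target, so $\bm b = \Sigma \bm \eta_j / (\bm \eta_j^\top \Sigma \bm \eta_j)$ also vanishes on source coordinates. Consequently only the target block $X^t = A^t + B^t z$ varies with $z$ along the line, while $X^s$ stays frozen at $A^s$.

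Next, for every $j \in \cO_{\rm obs}$ and $k \in [d]$ the $k$-th signed subtraction is an affine function of $z$:
\[
X^t_{j,k} - \bar{X}^t_{\cO^c, k} = \gamma_{j,k} + \delta_{j,k}\, z,
\]
with
\[
\gamma_{j,k} = A^t_{j,k} - \tfrac{1}{n_t - |\cO_{\rm obs}|} \sum_{\ell \in \cO_{\rm obs}^c} A^t_{\ell, k}, \quad \delta_{j,k} = B^t_{j,k} - \tfrac{1}{n_t - |\cO_{\rm obs}|} \sum_{\ell \in \cO_{\rm obs}^c} B^t_{\ell, k}.
\]
The observed sign $s^{\rm obs}_{j,k} \in \{-1, +1\}$ is preserved at $z$ precisely when $s^{\rm obs}_{j,k}(\gamma_{j,k} + \delta_{j,k} z) \geq 0$, which rearranges to the linear inequality $(-s^{\rm obs}_{j,k}\delta_{j,k})\, z \leq s^{\rm obs}_{j,k}\gamma_{j,k}$.

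Stacking one such inequality per pair $(j, k) \in \cO_{\rm obs} \times [d]$ then gives the system $\bm w z \leq \bm o$, with entries $w_{(j,k)} = -s^{\rm obs}_{j,k}\delta_{j,k}$ and $o_{(j,k)} = s^{\rm obs}_{j,k}\gamma_{j,k}$. By construction, the system holds exactly when $\cS_{\bm a + \bm b z} = \cS_{\rm obs}$, so the characterization $\cZ_{2} = \{z \in \RR \mid \bm w z \leq \bm o\}$ follows.

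I do not expect a substantive obstacle: the lemma is essentially the elementary statement that the sign of an affine function of a scalar is controlled by a single linear inequality. The real care lies in bookkeeping---correctly extracting $\gamma_{j,k}$ and $\delta_{j,k}$ from the reshaped $\bm a, \bm b$, verifying the vanishing of the source block of $\bm b$ so that no spurious source-side contributions appear, and indexing the rows of $\bm w$ and $\bm o$ consistently with the ordering chosen for the entries of $\cS_{\rm obs}$.
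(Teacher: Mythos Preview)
Your approach is essentially the paper's: each difference $X^t_{j,k} - \bar X^t_{\cO^c,k}$ is affine in $z$ along the line $\bm a + \bm b z$, so fixing its sign is one linear inequality, and stacking gives $\bm w z \le \bm o$. One bookkeeping correction: the selective $p$-value---and hence $\bm\eta_j$, $\bm a$, $\bm b$, and $\cS_{\rm obs}$---is computed for a \emph{single fixed} anomaly index $j$, so $\cS_{\rm obs}$ contains only the $d$ signs for that $j$; you should stack over $k \in [d]$, not over all pairs $(j,k)\in\cO_{\rm obs}\times[d]$.
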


The proofs of Lemmas \ref{lem:ad_algorithm} and \ref{lem:test_statistic_calculation} are deferred to Appendices \ref{appx:proof_ad_algorithm} and \ref{appx:proof_test_statistic_calculation}, respectively. Due to the complexity and length of the definitions of $\bm{r}$, $\bm{t}$, $\bm{w}$, and $\bm{o}$, we also defer them to the appendices. In Lemmas \ref{lem:activation_function}, \ref{lem:ad_algorithm}, and \ref{lem:test_statistic_calculation}, we show that the truncation regions $\cZ_u$, $\cZ_v$, and $\cZ_2$ can each be characterized by a set of linear inequalities with respect to $z$. Once $\cZ_u$ and $\cZ_v$ are computed, the sub-problem region $\cZ_{u,v}$ in Eq.~\eqref{eq:extra_conditioning} can be obtained as the intersection $\cZ_{u,v} = \cZ_u \cap \cZ_v$.

\begin{algorithm}[!t]
\caption{\texttt{STAND-DA}}\label{stand-da-algorithm}
\begin{algorithmic}[1]
    \Require $X^s_{\rm obs}$, $X^t_{\rm obs}$, $z_{\rm min}$, $z_{\rm max}$
    \vspace{2pt}
    \State $\cO_{\rm obs}$ $\gets$ AE-based AD after RL-based DA on $\left (X^s_{\rm obs}, X^t_{\rm obs} \right)$
    \vspace{2pt}
    \For {$j \in \cO_{\rm obs}$}
    \vspace{2pt}
        \State Compute $\bm \eta_j \gets \text{Eq. } \eqref{eq:etaj}, {\bm a}$ and $\bm b \gets \text{Eq. } \eqref{eq:cZ}$
        \vspace{2pt}
        \State $\cZ \gets \texttt{divide\_and\_conquer}(\bm a, \bm b, z_{\rm min}, z_{\rm max}, \cO_{\rm obs})$
        \vspace{2pt}
        \State $p_j^{\rm selective} \gets$ Eq. \eqref{eq:valid_p_value_reformulated} with $\cZ$
        \vspace{2pt}
    \EndFor
    \vspace{2pt}
    \Ensure $\{p_j^{\rm selective}\}_{i \in \cO_{\rm obs}}$
\end{algorithmic}
\end{algorithm}

\begin{algorithm}[!t]
    \caption{\texttt{divide\_and\_conquer}}\label{line-search-algorithm}
    \begin{algorithmic}[1]
        \Require $\bm a$, $\bm b$, $z_{\rm min}$, $z_{\rm max}, \cO_{\rm obs}$
        \vspace{2pt}
        \State \textbf{Initialization:} $z = z_{\rm min}, \cZ_1 = \emptyset$
        \vspace{2pt}
        \While {$z \leq z_{\rm max}$}
        \vspace{2pt}
            \State $\cO_{\bm a + \bm b z} \gets$ AE-based AD after RL-based DA on $\bm a + \bm bz$
            \vspace{2pt}
            \State Compute $\cZ_u \gets$ Lemma \ref{lem:activation_function} and $\cZ_v \gets$ Lemma \ref{lem:ad_algorithm}
            \vspace{2pt}
            \State $\cZ_{u, v} = [\ell^z, r^z] \gets \cZ_u \cap \cZ_v$
            \vspace{2pt}
            \If{$\cO_{\bm a + \bm b z} = \cO_{\rm obs}$} 
            \vspace{2pt}
                \State $\cZ_1 \gets \cZ_1 \cup \cZ_{u, v}$
            \EndIf 
            \vspace{2pt}
            \State $z \gets r^z + \Delta z$ $\quad $// Step past $r^z$ by adding a small $\Delta z$ (e.g., 0.001)
            \vspace{2pt}
        \EndWhile
        \vspace{2pt}
        \State $\cZ_2 \gets$ Lemma \ref{lem:test_statistic_calculation}
        \vspace{2pt}
        \State $\cZ = \cZ_1 \cap \cZ_2$
        \vspace{2pt}
        \Ensure $\cZ$
    \end{algorithmic}
\end{algorithm}

\vspace{8pt}
\textbf{Computing $\cZ$ by integrating solutions from multiple sub-problems.}
To identify $\cZ$ in \eqref{eq:cZ}, the RL-based DA and the AE-based AD after DA are repeatedly applied to a sequence of datasets of the form $\bm a + \bm b z$, across a sufficiently wide range of values $z \in [z_{\rm min}, z_{\rm max}]$.
This divide-and-conquer procedure is outlined in Algorithm~\ref{line-search-algorithm}.
Once $\cZ$ is obtained via Algorithm~\ref{line-search-algorithm}, the proposed selective $p$-value in \eqref{eq:valid_p_value_reformulated} can be computed.
The complete procedure of the proposed STAND-DA method is summarized in Algorithm~\ref{stand-da-algorithm}.

\section{Implementation: GPU-Accelerated STAND-DA} \label{sec:extension} 

In the context of SI for DNN-related problems, computational efficiency is a critical concern due to the intensive nature of the underlying procedures. 
A major bottleneck arises from the repeated forward passes through the network during the line search process (Algorithm~\ref{line-search-algorithm}), where each step may involve evaluating many data points. These repeated processes can quickly become prohibitively expensive, particularly for large-scale models or datasets.
To address this, GPU acceleration is employed to substantially reduce the computational burden. By exploiting the highly parallelizable structure of matrix operations inherent in DNNs, GPUs can significantly speed up forward propagation---making SI tractable even for deep and wide architectures.
This section details how GPU-based acceleration is integrated into the proposed  STAND-DA method.

\subsection{Framework and Setup}

% We accelerated our method using Numba’s CUDA extension ({\tt numba.cuda}), which provides a low-level API for designing custom GPU kernels directly in Python. In contrast to high-level frameworks like PyTorch and TensorFlow, which encapsulate matrix operations within abstracted, general-purpose APIs, {\tt numba.cuda} enables fine-grained control over memory management and thread-level parallelism. This low-level access reduces the overhead associated with abstraction layers and framework startup, which is critical for the efficiency required in our problem setting.

We accelerated our method using Numba’s CUDA extension ({\tt numba.cuda}), which provides a low-level API for designing custom GPU kernels directly in Python. Unlike high-level frameworks such as PyTorch and TensorFlow, which encapsulate matrix operations within abstract, general-purpose APIs, {\tt numba.cuda} offers fine-grained control over data transfer between host (CPU) and device (GPU) as well as customizable parallelism. This control allows us to tailor both computation and memory layout to the specific structure of our problem, yielding significantly improved performance and reduced overhead.

\vspace{5pt}

% To combine the flexibility of PyTorch model definition with the performance advantages of Numba-CUDA kernels, we extract the trained parameters from a PyTorch-defined neural network and convert them into GPU-resident NumPy arrays. This process involves parsing the model's layer structure and parameter tensors (weights and biases), performing necessary reshaping and transposition to match the column-major memory access patterns expected by the custom kernels, and uploading the results to the GPU. The network is assumed to follow a simple feed-forward architecture, alternating between Linear and ReLU layers.

To combine the flexibility of PyTorch’s model definition with the performance advantages of Numba-CUDA kernels, we extract trained parameters from a PyTorch-defined neural network and convert them into Numba-CUDA device arrays. The conversion process involves parsing the model’s layer structure, extracting each layer’s weights and biases, and storing them as \{layer name, device array\} pairs. By completing this step before the line-search phase, we can directly use these parameters in our custom kernels without incurring additional data transfer costs, tailoring them for GPU-based selective inference using {\tt numbda.cuda}.

\begin{algorithm}[!t]
    \caption{\texttt{parse\_network}}\label{alg:convert-network}
    \begin{algorithmic}[1]
        \Require Trained PyTorch model $\mathcal{N}$ 
        \vspace{2pt}
        \State \textbf{Initialize:} $\texttt{list\_layers}=\emptyset$
        \vspace{2pt}
        \ForAll{(\texttt{name}, \texttt{param}) in $\mathcal{N}$}
            \vspace{2pt}
            \If{\texttt{name} = \texttt{`Linear Weight'}}
                \vspace{2pt}
                \State \texttt{weight} $\gets$ Numba-CUDA's device array of \texttt{param}
                \vspace{2pt}
                \State Append(\{\texttt{`Linear Weight'}, \texttt{weight}\}) to \texttt{list\_layers}
                \vspace{2pt}
            \ElsIf{\texttt{name} = \texttt{`Linear Bias'}}
                \vspace{2pt}
                \State \texttt{bias} $\gets$ Numba-CUDA's device array of \texttt{param}
                \vspace{2pt}
                \State Append(\{\texttt{`Linear Bias'}, \texttt{bias}\}) to \texttt{list\_layers}
                \vspace{2pt}
            \ElsIf{\texttt{name} = \texttt{`ReLU'}}
                \vspace{2pt}
                \State Append(\{\texttt{`ReLU'}, \texttt{None}\}) to \texttt{list\_layers}
                \vspace{2pt}
            \EndIf
            \vspace{2pt}
        \EndFor
        \vspace{2pt}
        \Ensure Parsed Network \texttt{list\_layers}
    \end{algorithmic}
\end{algorithm}

\vspace{5pt}

\vspace{5pt}

Algorithm \ref{alg:convert-network} outlines the conversion process: the model’s layers are parsed, weights and biases from Linear layers are extracted, and all parameters are transferred to GPU memory. In this work, the network is assumed to follow a simple feed-forward architecture alternating between Linear and ReLU layers. Although the current implementation supports only Linear and ReLU layers in PyTorch-defined models, it can be easily extended to include additional activation or normalization layers, as well as models from other frameworks such as Keras and TensorFlow.

\subsection{CUDA Acceleration of the Proposed STAND-DA Method}

To accelerate the repeated matrix computations and interval updates during the feed-forwarding process of the DNN, we implemented all key operations as custom GPU kernels using {\tt numba.cuda}. This custom design offloads the computational workload to the GPU and reduces the time overhead from frequent data transfers between host and device, ensuring that intermediate results remain in GPU memory throughout the computation.

\vspace{5pt}

The core acceleration targets the following steps:
\begin{itemize}
    \item \textbf{Affine transformations:} We designed custom matrix affine transformation kernels to accelerate forward passes through the network during the line search in Algorithm \ref{line-search-algorithm}. Each transformation is decomposed into a matrix multiplication followed by a bias addition, both implemented using custom CUDA kernels:
   
    \begin{itemize}
        \item \texttt{MatMulMat}: a shared-memory–tiled CUDA kernel for efficient matrix multiplication.
        \item \texttt{MatAddBias}: a 1D-grid kernel that adds biases column-wise across matrix rows.
    \end{itemize}
    This customization enables the linear operations in \eqref{eq:X_l}, \eqref{A_l} and \eqref{B_l} to execute in parallel on the GPU. As a result, we achieved significant speedups over high-level frameworks by explicitly controlling data transfers between the host (CPU) and the device (GPU).
    
\vspace{5pt}

    \item \textbf{Conditional via {\tt siReLU}:} We introduce a specialized kernel, \texttt{siReLU} (SI for ReLU), to perform ReLU on the data while conditioning on the activation pattern of ReLU during forward propagation. This design allows us to feed-forward the data and dynamically condition on the event in Lemma \ref{lem:activation_function}, thereby accelerating the computation of the corresponding interval. Specifically, this kernel:
    \begin{itemize}
        \item Applies ReLU to $X^+_{(l)}$, where $X^+_{(l)}=X_{(l-1)} W_{(l)} + \mathbf{1}_n\,\mathbf{c}_{(l)}^\top$ denotes the linear transformation of $X_{(l-1)}$ after the $l$-th linear layer.
        \item Forwards the linear representation matrices of $X^+_{(l)}$ through the $l$-th ReLU layer.
        \item Dynamically updates the interval bounds for conditional event in Lemma \ref{lem:activation_function} while performing feed-forwarding through the layers. This interval is updated by solving the inequality in \eqref{F_l_ineq} for each element:
        \[
        \left[F_{(l)}\right]_{ij} \circ \left(\left[A^+_{(l)}\right]_{ij} + \left[B^+_{(l)}\right]_{ij}z\right) \geq 0,
        \]
        where $A^+_{(l)}=A_{(l-1)}W_{(l)}+\bm 1_n \bm c^\top_{(l)}$ and $B^+_{(l)}=B_{(l-1)}W_{(l)}$ represent linear transformation matrices of $X^+_{(l)}$ w.r.t scalar $z$, i.e., $X^+_{(l)}=A^+_{(l)}+B^+_{(l)}z$.
    \end{itemize}
\end{itemize}

Algorithm \ref{siReLU_algorithm} details the implementation of {\tt siReLU} kernel. This kernel forwards both $X^+_{(l)}$ and its parameterized form through the $l$-th ReLU layer. In this way, it directly integrates the conditioning on each layer’s activation patterns into the data feed-forward process. This specialized design enables the interval bounds associated with these conditions to be dynamically updated layer by layer, avoiding separate post-processing steps and ensuring tight bounds throughout the forward propagation. Furthermore, by leveraging Numba-CUDA’s parallel execution model, the for-loop over each $(i, j)$ matrix element is mapped to GPU threads, allowing simultaneous processing of all elements and significantly reducing execution time compared to sequential CPU iteration. While the current implementation is customized for ReLU, the design of this {\tt siReLU} kernel can be easily extended to other piecewise-linear or even non-linear activation functions, as discussed in \S\ref{subsec:truncation_region}.

\begin{algorithm}[!t]
    \caption{\texttt{siReLU}}\label{siReLU_algorithm}
    \begin{algorithmic}[1]
        \Require $X^+_{(l)}$, $A^+_{(l)}$, $B^+_{(l)}$, $\left[l, \, r\right]$
        \vspace{2pt} 
        \State \textbf{Initialization:} $X_{(l)} \gets X^+_{(l)}$, $A_{(l)} \gets A^+_{(l)}$, $B_{(l)} \gets B^+_{(l)}$
            \vspace{2pt}
            \State $\mathcal{F}_{(l)} \gets$ Eq. \eqref{F_l}
            \vspace{2pt}
            \For{$i \in [n_s+n_t]$, $j \in [d_l]$} $\quad$//Executed in parallel across $(i, j)$
                \vspace{2pt}
                    \If{$\left[{\mathcal{F}}_{(l)}\right]_{ij}==-1$}
                    \vspace{2pt}
                    \State $\left[X_{(l)}\right]_{ij} \gets 0$, $\left[A_{(l)}\right]_{ij} \gets 0$, $\left[B_{(l)}\right]_{ij} \gets 0$
                    \vspace{2pt}
                    \EndIf
                \vspace{2pt}
                \If{$\left[{\mathcal{F}}_{(l)}\right]_{ij} * \left[B^+_{(l)}\right]_{ij} > 0$}
                \vspace{2pt}
                \State $l \gets {\rm max}\Big(l, -\left[A^+_{(l)}\right]_{ij} / \left[B^+_{(l)}\right]_{ij}\Big)$
                \vspace{2pt}
                \Else
                \vspace{2pt}
                \State $r \gets {\rm min}\Big(r, -\left[A^+_{(l)}\right]_{ij} / \left[B^+_{(l)}\right]_{ij}\Big)$
                \vspace{2pt}
                \EndIf 
                \vspace{2pt}    
            \EndFor
        \vspace{2pt}
        \Ensure $X_{(l)}$, $A_{(l)}$, $B_{(l)}$, $\left[l,\, r\right]$
    \end{algorithmic}
\end{algorithm}

\vspace{5pt}

\begin{figure*}[!t]
\centering
\includegraphics[width=.85\textwidth]{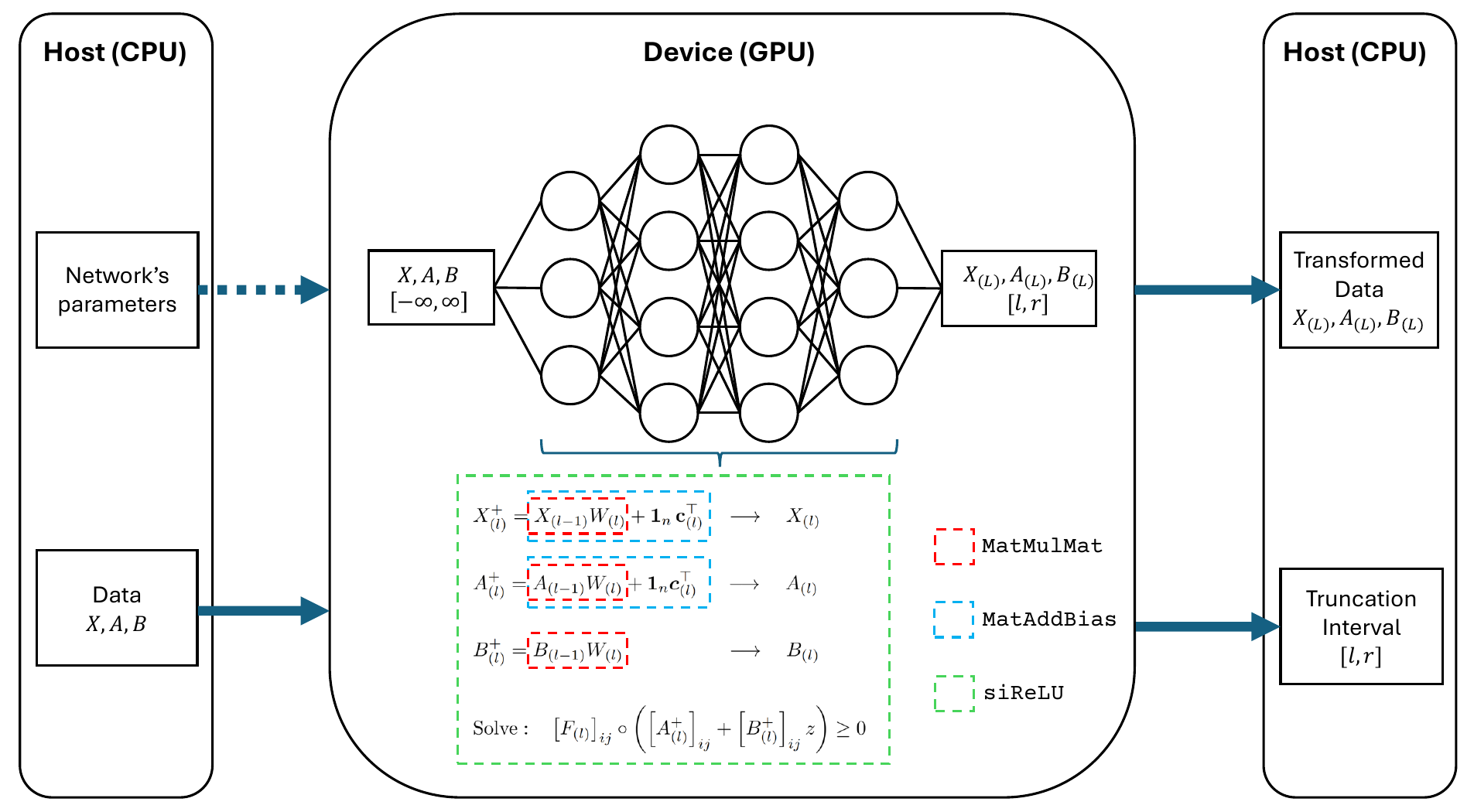}

\vspace{20pt}
\caption{Illustration of GPU acceleration in our proposed method. The network parameters, along with the input data and its linear representations, are transferred to the GPU. The feed-forward computations are executed using our custom kernels: {\tt MatMulMat}, {\tt MatAddBias}, {\tt siReLU}. During propagation through each layer, the input data, its parameterized form, and the interval bounds are dynamically updated.}
    \label{fig:gpu-accelerate}
\vspace{-10pt}
\end{figure*}

The entire feed-forward process is illustrated in Fig. \ref{fig:gpu-accelerate}. These GPU kernels are orchestrated at the layer level using separate CUDA streams to exploit asynchronous execution. For example, when multiplying by a weight matrix, the operations for $X$, $A$, and $B$ can run in parallel on different streams without waiting for one another, thereby improving overall throughput. Moreover, all intermediate arrays ($X$, $A$, $B$) are pre-allocated and reused, reducing memory allocation overhead and minimizing costly host–device data transfers throughout the inference phase. This low-level, fused-kernel approach enables our implementation to handle deep architectures efficiently during the repeated evaluations required by the line search, yielding substantial speedups over naïve CPU-based or high-level frameworks-based alternatives. The implementation of these kernels is available in the accompanying package at \url{https://github.com/DAIR-Group/STAND-DA}.

% \section{Extension to XXX \cite{li2022transfer}} \label{sec:extension}

\section{Experiments}\label{sec5}
In this section, we demonstrate the performance of our proposed STAND-DA method. We compared the performance of the following methods in terms of FPR and TPR:
\begin{itemize}
    \item $\texttt{STAND-DA}$: proposed method 
    
    \item $\texttt{STAND-DA-oc}$: proposed method with only the extra-conditioning mentioned described in \S \ref{subsec:truncation_region}
    
    \item $\texttt{Naive}$: traditional statistical inference
    
    \item $\texttt{Bonferroni}$: the most popular multiple hypothesis testing approach
    
    \item $\texttt{No Inference}$: AD after DA without inference
\end{itemize}

We note that any method failing to control the FPR below a specified threshold $\alpha$ is deemed statistically invalid, rendering its TPR irrelevant for evaluation. A high TPR simply implies a low FNR. In all experiments, we set the significance level at $\alpha = 0.05$.
For model architecture, both the feature extractor of RL-based DA and the AE-based AD follow a multi-layer perceptron (MLP) setup, consisting of fully connected layers with ReLU activations---an approach commonly employed for its practicality and strong empirical performance. The feature extractor is structured with two hidden layers containing 500 and 100 neurons. The domain critic is implemented as a single-layer MLP with 100 hidden units.
The AE is built symmetrically: the encoder comprises seven layers with decreasing dimensions $\big [100, 64, 32, 16, 8, 4, 2 \big ]$, and the decoder mirrors this layout in reverse as $\big [2, 4, 8, 16, 32, 64, 100 \big ]$. All hidden layers apply ReLU activations, except for the decoder's output layer. To quantify reconstruction quality, we minimize the $\ell_1$ reconstruction loss. Anomalies are identified by ranking samples according to their reconstruction error and labeling the top 5\% as outliers, using the $95^{\rm th}$ percentile as the anomaly threshold.

\subsection{Numerical Experiments}\label{subsec:numerical_experiments}

\textbf{Independent data.} 
We synthesized the source and target datasets $X^s$ and $X^t$ as follows. Each source sample $X^s_{i,:}$ was drawn from a $d$-dimensional standard normal distribution, i.e., $X^s_{i,:} \sim \mathbb{N}(\bm{0}_d, I_d)$ for all $i \in [n_s]$. Similarly, target samples $X^t_{j,:}$ were generated from a normal distribution with shifted mean, $X^t_{j,:} \sim \mathbb{N}(\bm{2}_d, I_d)$ for all $j \in [n_t]$. We fixed the feature dimension at $d = 10$. The training dataset consisted of $n_s = 1000$ source samples and $n_t = 100$ target samples. To introduce anomalies, $5\%$ of the instances in each domain were randomly chosen and perturbed by adding a constant offset $\Delta$ to the corresponding mean vector components.
For the evaluation of FPR, we varied the source sample size as $n_s \in \{50, 100, 150, 200\}$, fixed $n_t = 25$, and set  $\Delta = 0$. The TPR analysis was conducted with $n_s = 150$, $n_t = 50$, and varying $\Delta \in \{0.5, 1.0, 1.5, 2.0\}$. 

\vspace{5pt}

The results are visualized in Fig.~\ref{fig:fpr_tpr_10d}.
Each marker in the plots denotes the empirical probability $\mathbb{P}(\text{$p$-value} \leq \alpha)$, estimated by the proportion of 120 simulation runs in which the computed $p$-value falls below the threshold $\alpha$.
In the left subplot, we observe that \texttt{STAND-DA}, \texttt{STAND-DA-oc}, and the \texttt{Bonferroni} correction successfully maintain FPR control under the significance level $\alpha$. However, the \texttt{Naive} and \texttt{No Inference} baselines fail to meet this criterion and thus were excluded from further consideration in the TPR analysis.
Turning to the right subplot, we find that \texttt{STAND-DA} consistently achieves the highest TPR across all levels of $\Delta$, indicating superior statistical power. This corresponds to the lowest FNR among all evaluated methods.

\begin{figure}[!t]
     \centering
     \begin{subfigure}[b]{0.492\linewidth}
         \centering
         \includegraphics[width=\textwidth]{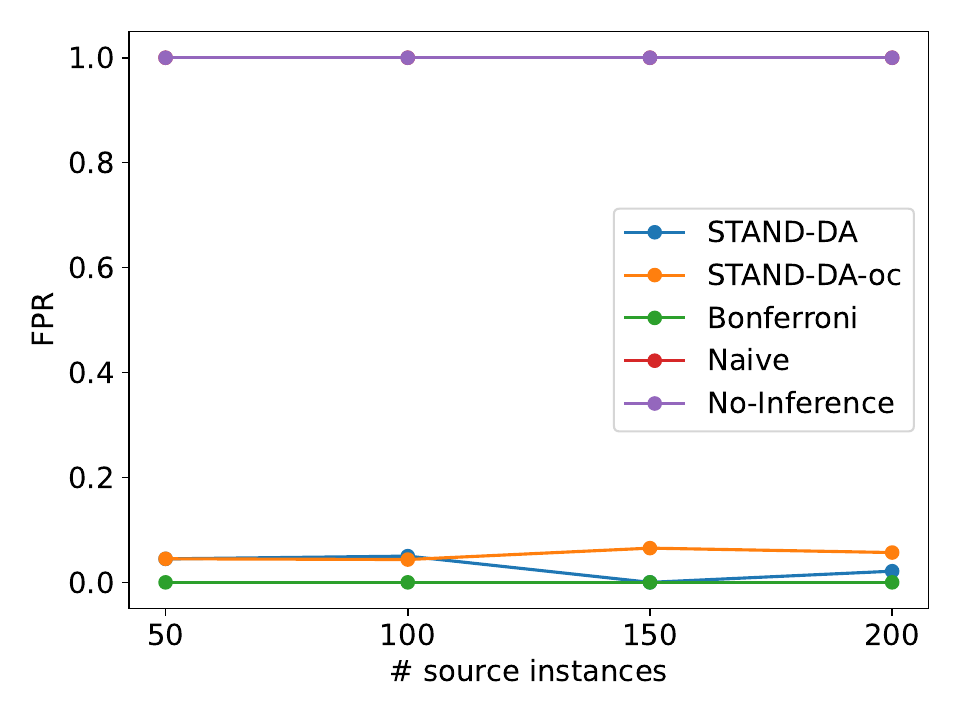}
         \caption{FPR}
     \end{subfigure}
     \hfill
     \begin{subfigure}[b]{0.492\linewidth}
         \centering
         \includegraphics[width=\textwidth]{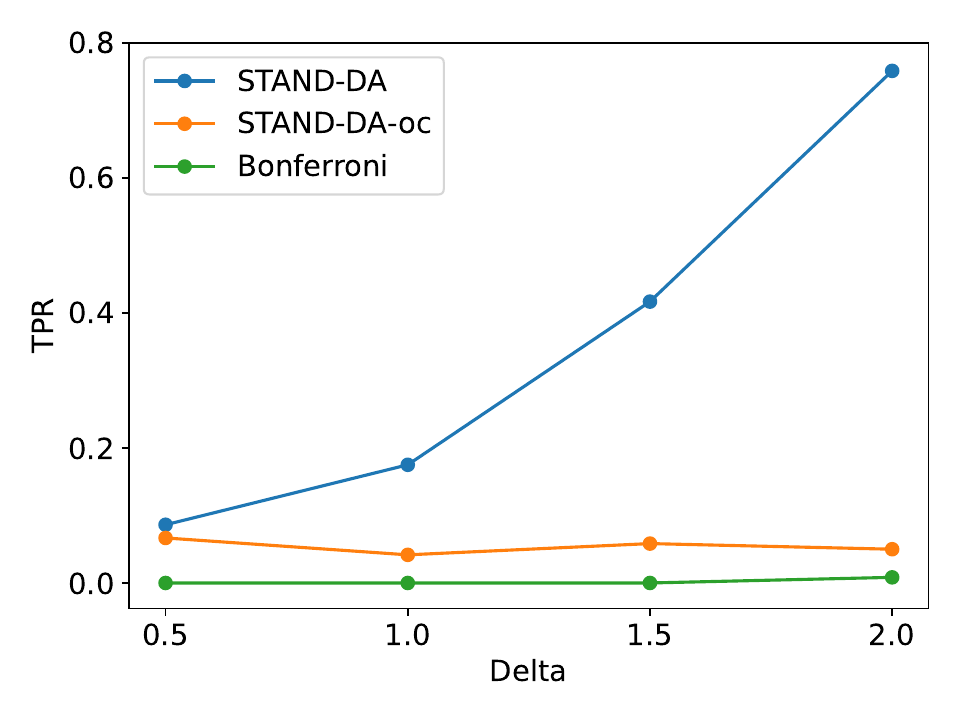}
         \caption{TPR}
     \end{subfigure}
     \caption{FPR and TPR in the case of independent data}
     \label{fig:fpr_tpr_10d}
     \vspace{-8pt}
\end{figure}

\vspace{5pt}

\begin{figure}[!t]
     \centering
     \begin{subfigure}[b]{0.492\linewidth}
         \centering
         \includegraphics[width=\textwidth]{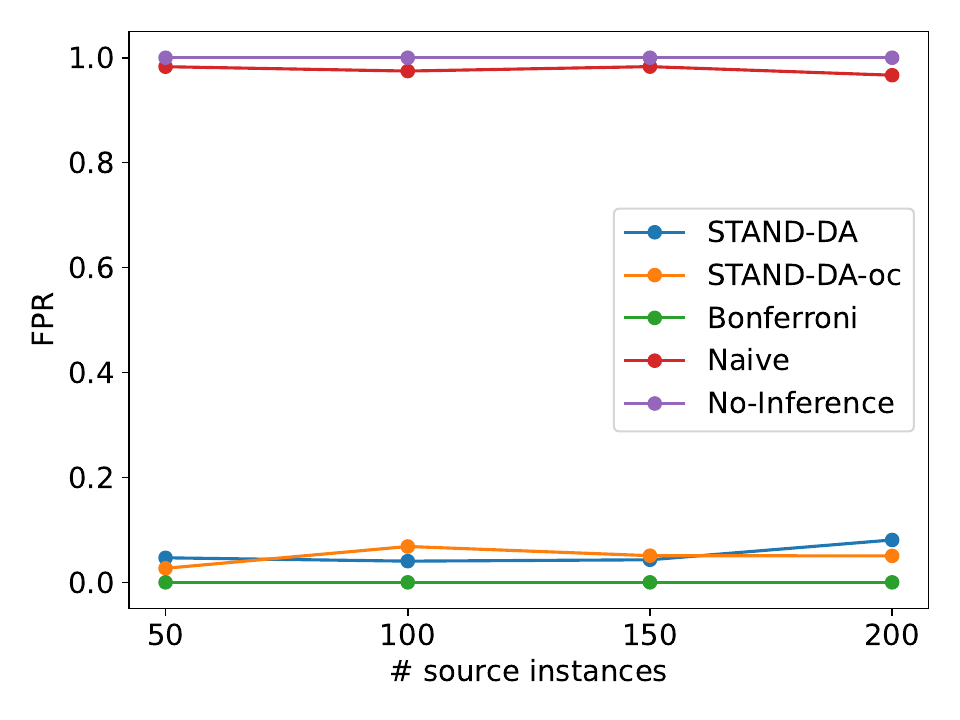}
         \caption{FPR}
     \end{subfigure}
     \hfill
     \begin{subfigure}[b]{0.492\linewidth}
         \centering
         \includegraphics[width=\textwidth]{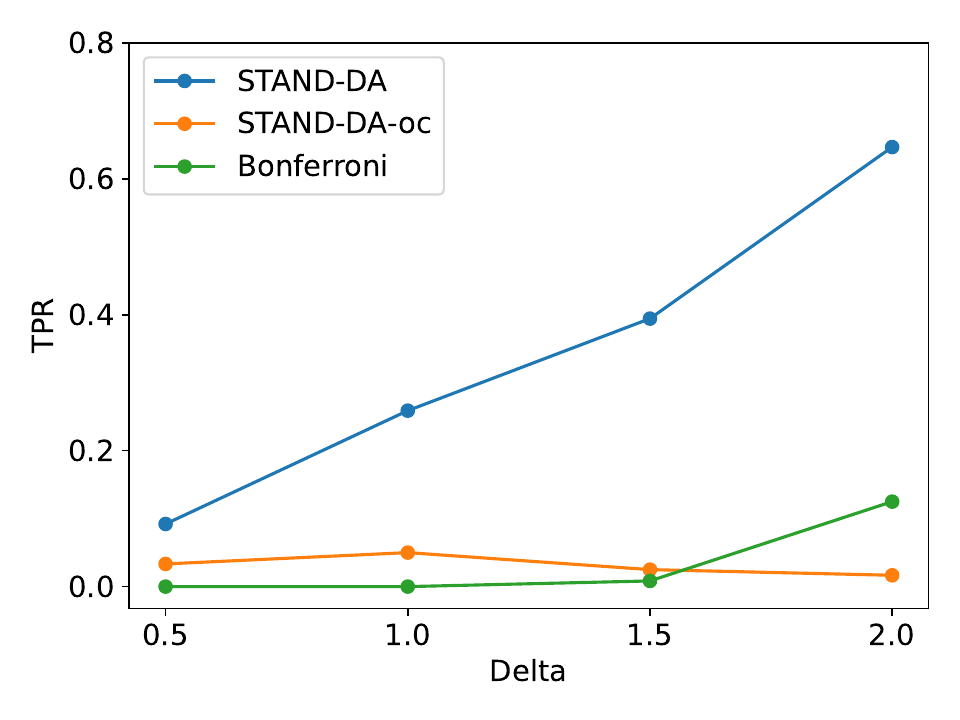}
         \caption{TPR}
     \end{subfigure}
     \caption{FPR and TPR in the case of correlated data}
     \label{fig:fpr_tpr_correlated}
\end{figure}

\textbf{Correlated data.} 
In this scenario, we explore a setting where the data exhibits internal correlation. Specifically, we construct the source and target datasets such that each row of the source data matrix \( X^s \) follows a multivariate normal distribution \( X^s_{i, :} \sim \mathbb{N}(\mathbf{0}_d, \Xi) \) for all \( i \in [n_s] \), while each row of the target data matrix \( X^t \) is sampled from \( \mathbb{N}(\mathbf{2}_d, \Xi) \) for all \( j \in [n_t] \). The covariance matrix \( \Xi \in \mathbb{R}^{d \times d} \) is defined as \( \Xi_{ij} = \rho^{|i - j|} \), with \( \rho = 0.5 \) and dimensionality \( d = 10 \). All other experimental parameters for measuring FPR and TPR mirror those used in the independent data setting. The results, shown in Fig.~\ref{fig:fpr_tpr_correlated}, demonstrate that our proposed \texttt{STAND-DA} method consistently achieves the highest TPR while controlling the FPR across all trials.

\begin{figure}[!t]
    \centering 
    \centering
     \begin{subfigure}[b]{0.492\linewidth}
         \centering
         \includegraphics[width=\textwidth]{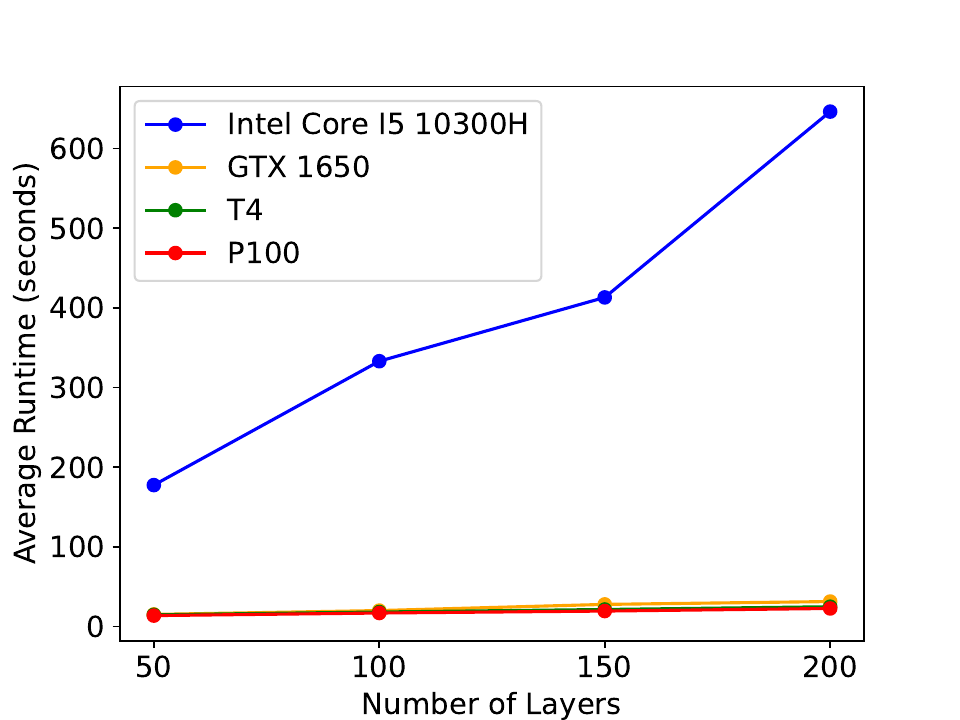}
         \caption{Execution time on all devices}
     \end{subfigure}
     \hfill
     \begin{subfigure}[b]{0.492\linewidth}
         \centering
         \includegraphics[width=\textwidth]{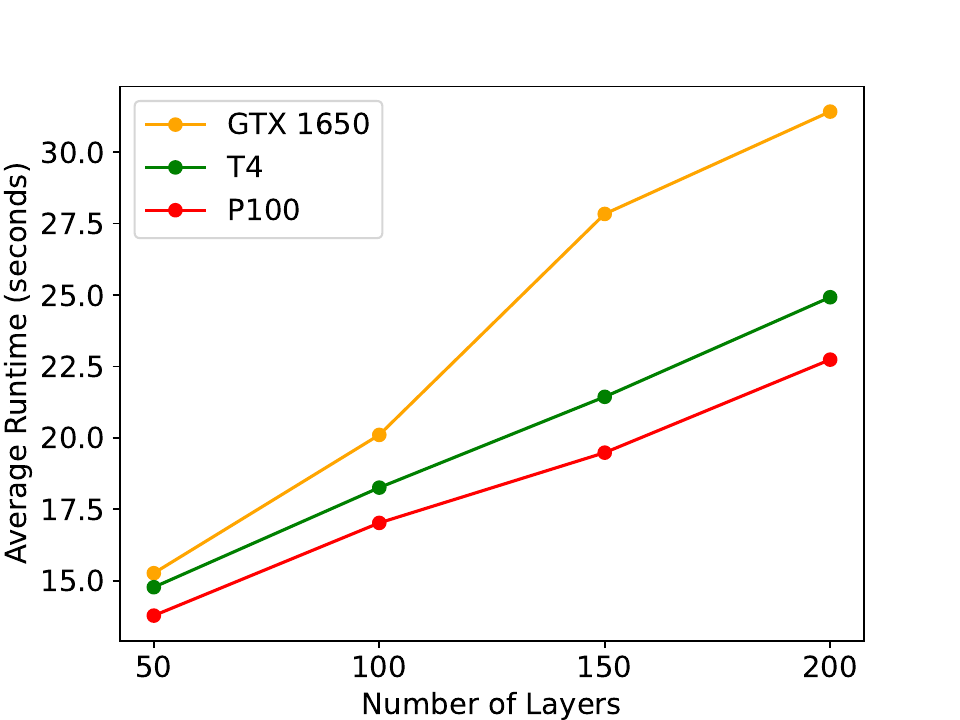}
         \caption{Execution time on only GPUs}
     \end{subfigure}
    \caption{Execution time on different devices}
    \label{fig:execution_time_devices}
\end{figure}

\vspace{8pt}
\textbf{Computational time.} 
To evaluate the efficiency of our proposed {\tt STAND-DA} framework, we measured its execution time across various computing environments, including a standard Intel Core i5-10300H CPU and three different GPUs: NVIDIA GTX 1650, T4, and P100. The feature extractor architecture was designed with structure of $[500, 100, 128]$. The AE architecture was designed with an encoder structure of $[128] \times l/2$ followed by $[64, 32, 16, 8, 4, 2]$, and a decoder mirroring this pattern in reverse, ending with $[128] \times l/2$, where $l$ took values in $\{50, 100, 150, 200\}$ and the sample size was fixed at $n_s = 150$, the data dimension was also fixed at $d=128$. All other settings aligned with those used in our experiments with independently drawn data. As illustrated in Fig.~\ref{fig:execution_time_devices}, we observed a linear growth in execution time as the number of layers increased. Importantly, GPU-based executions consistently outperformed the CPU, with the T4 and P100 showing faster runtimes than the GTX 1650, especially on the higher number of layers. This highlights the clear advantage of deploying STAND-DA on high-performance, server-grade GPUs, which are specifically engineered for demanding machine learning applications. The results affirm that our method scales efficiently when leveraging GPU architectures.

\begin{figure}[!t]
    \centering 
    \centering
     \begin{subfigure}[b]{0.492\linewidth}
         \centering
         \includegraphics[width=\textwidth]{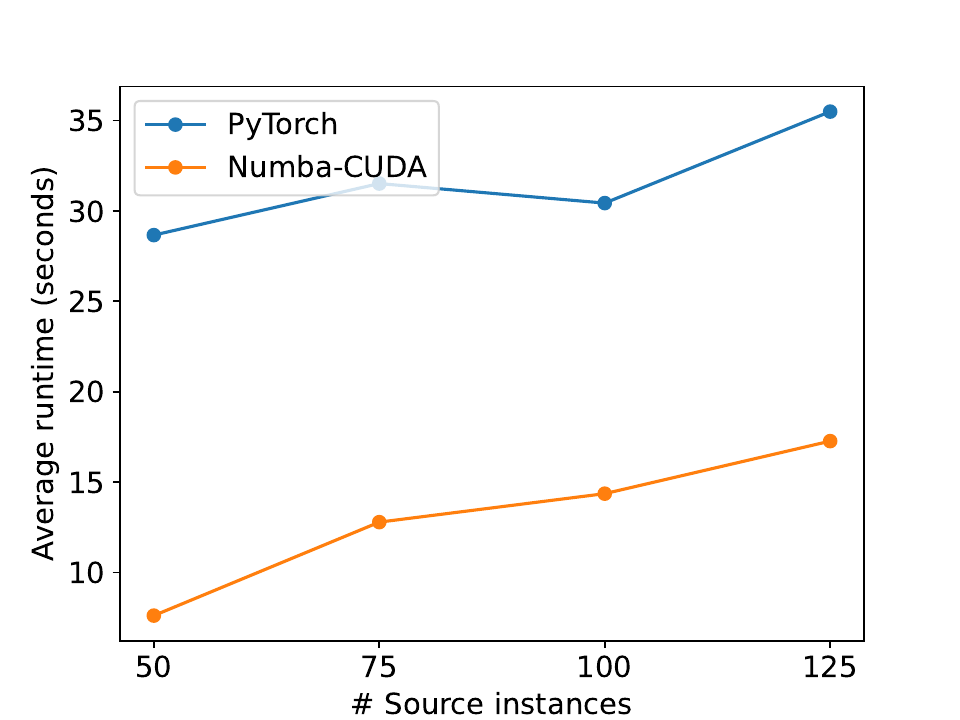}
         \caption{Execution time vs. \# Source instances}
         \label{fig:compare_pytorch_ns}
     \end{subfigure}
     \hfill
     \begin{subfigure}[b]{0.492\linewidth}
         \centering
         \includegraphics[width=\textwidth]{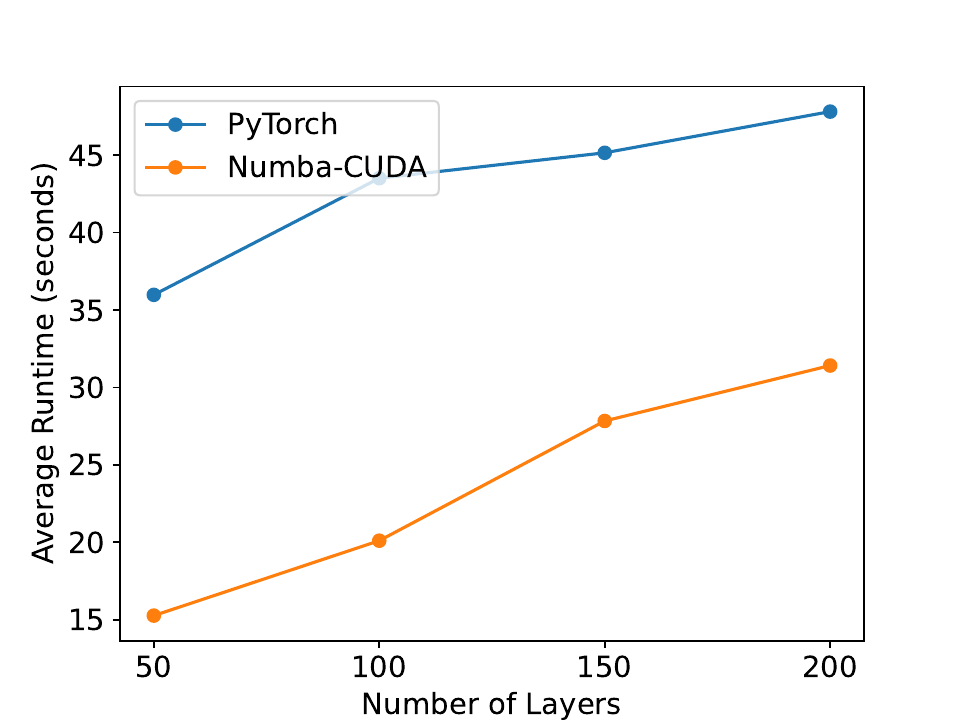}
         \caption{Execution time vs. Number of layers}
         \label{fig:compare_pytorch_layers}
     \end{subfigure}
    \caption{Execution time on different implementations}
    \label{fig:compare_pytorch}
\end{figure}

\vspace{5pt}

\textbf{Efficiency of GPU acceleration.} We evaluate the efficiency of our custom GPU kernels by comparing the total time required to compute a $p$-value against a PyTorch-based implementation. The results are presented in Fig. \ref{fig:compare_pytorch}. In Fig. \ref{fig:compare_pytorch_ns}, we set the number of source samples $n_s \in \{50, 75, 100, 125\}$ is varied to represent increasing dataset sizes, while other configurations follow those used in the FPR experiments. In Fig. \ref{fig:compare_pytorch_layers}, we set the number of layers $l\in\{50, 100, 150, 200\}$ to examine scalability with respect to network width and depth, with other configurations identical to those in the computational time experiments. Across both settings, the PyTorch implementation exhibits substantially higher computational time than our implementation using Numba-CUDA kernels, highlighting the superior efficiency of our specialized kernels for this problem.

\subsection{Real-data Experiments}\label{subsec:real_world_experiments}

\begin{figure}[!t]
     \centering
     \begin{subfigure}[b]{0.492\linewidth}
         \centering
         \includegraphics[width=\textwidth]{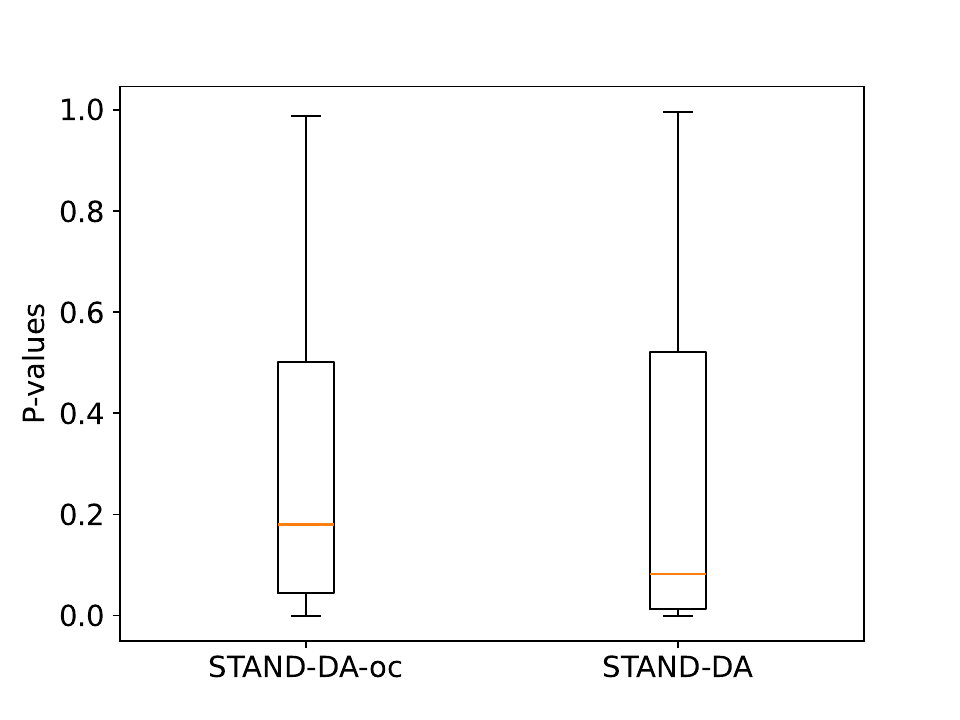}
         \caption{Heart Disease Dataset}
     \end{subfigure}
     \hfill
     \begin{subfigure}[b]{0.492\linewidth}
         \centering
         \includegraphics[width=\textwidth]{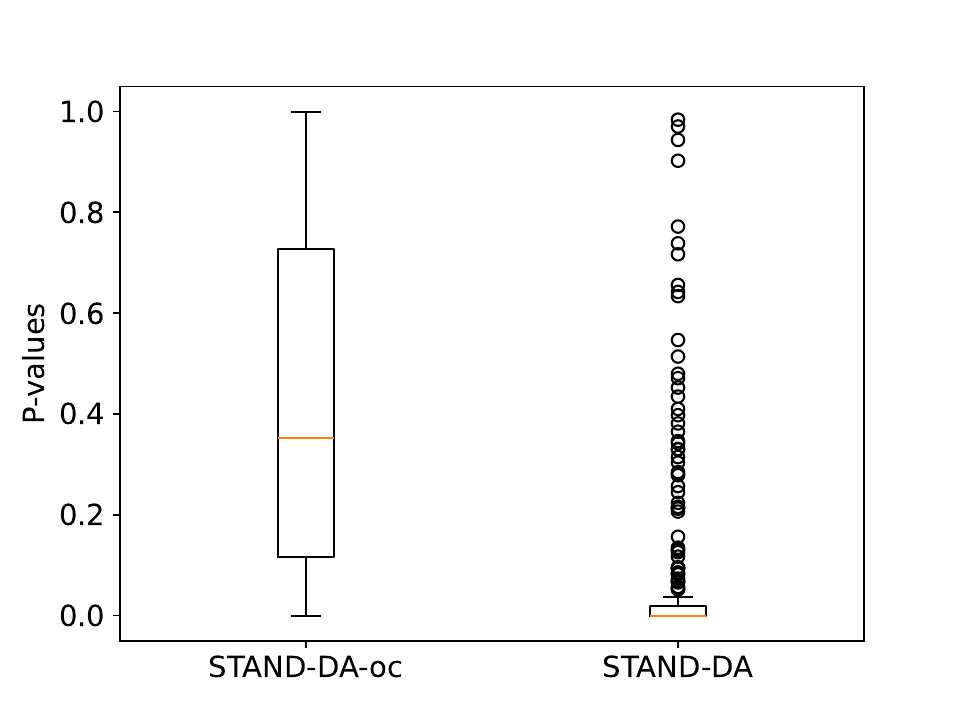}
         \caption{Breast Cancer Wisconsin Dataset}
     \end{subfigure}
     \hfill
     \begin{subfigure}[b]{0.492\linewidth}
         \centering
         \includegraphics[width=\textwidth]{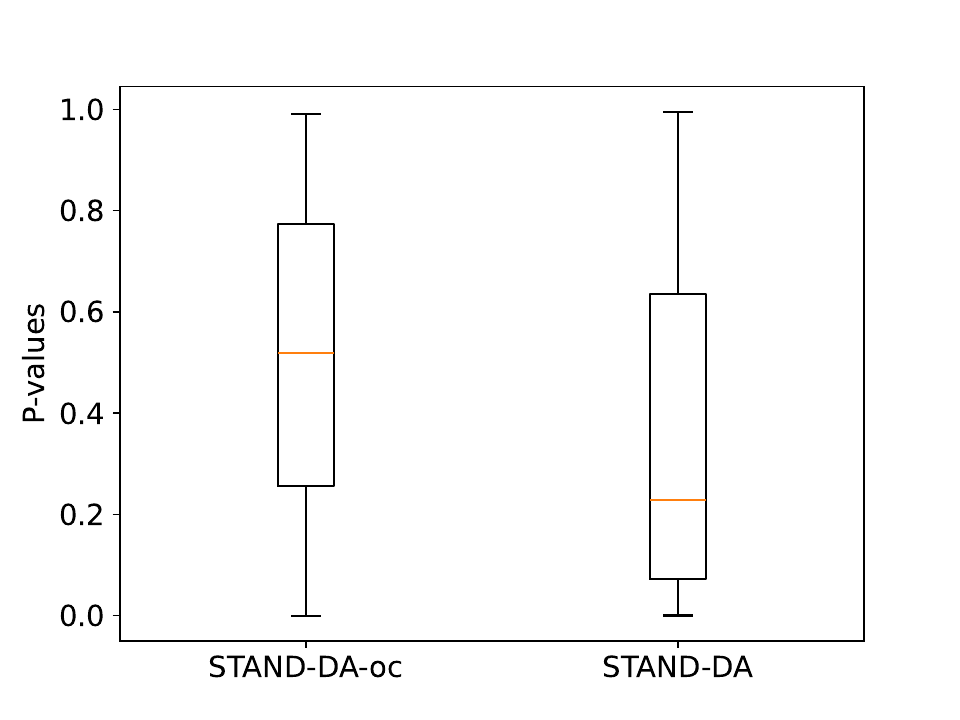}
         \caption{CDC Diabetes Health Indicators Dataset}
     \end{subfigure}
     \caption{Boxplots of $p$-values on real datasets}
     \label{fig:real_exp_boxplots}
\end{figure}

\begin{figure}[!t]
     \centering
     \begin{subfigure}[b]{0.492\linewidth}
         \centering
         \includegraphics[width=\textwidth]{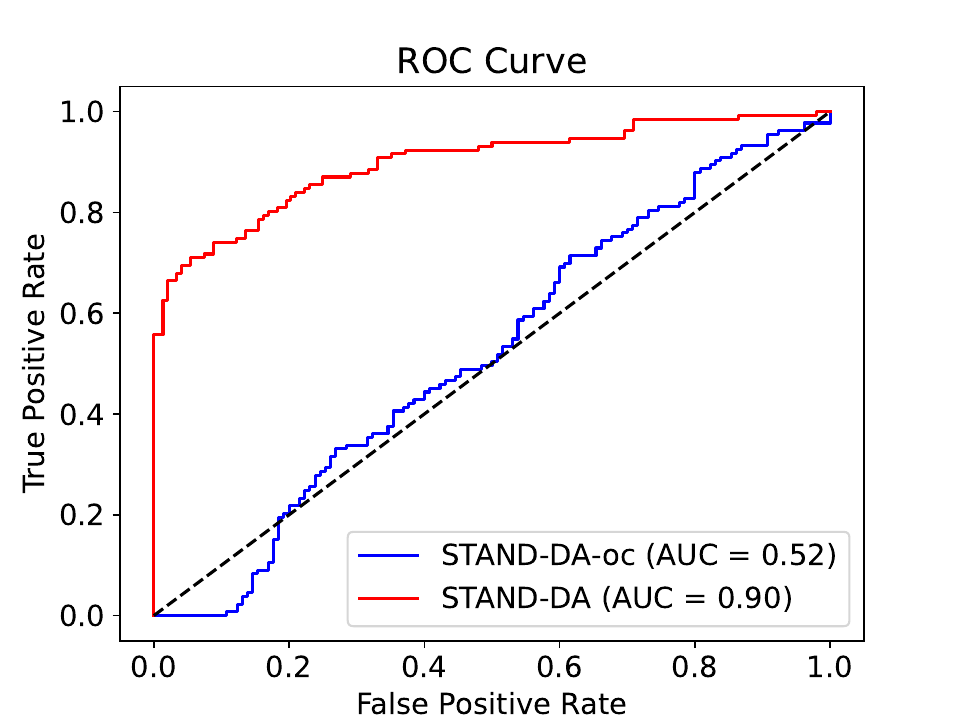}
         \caption{Heart Disease Dataset}
     \end{subfigure}
     \hfill
     \begin{subfigure}[b]{0.492\linewidth}
         \centering
         \includegraphics[width=\textwidth]{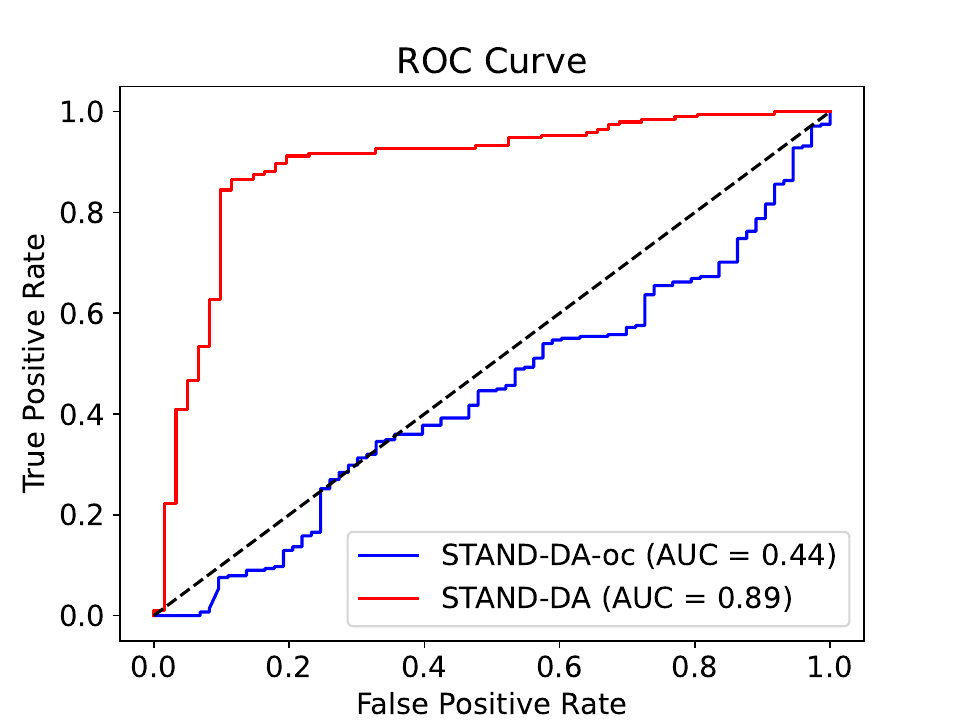}
         \caption{Breast Cancer Wisconsin Dataset}
     \end{subfigure}
     \hfill
     \begin{subfigure}[b]{0.492\linewidth}
         \centering
         \includegraphics[width=\textwidth]{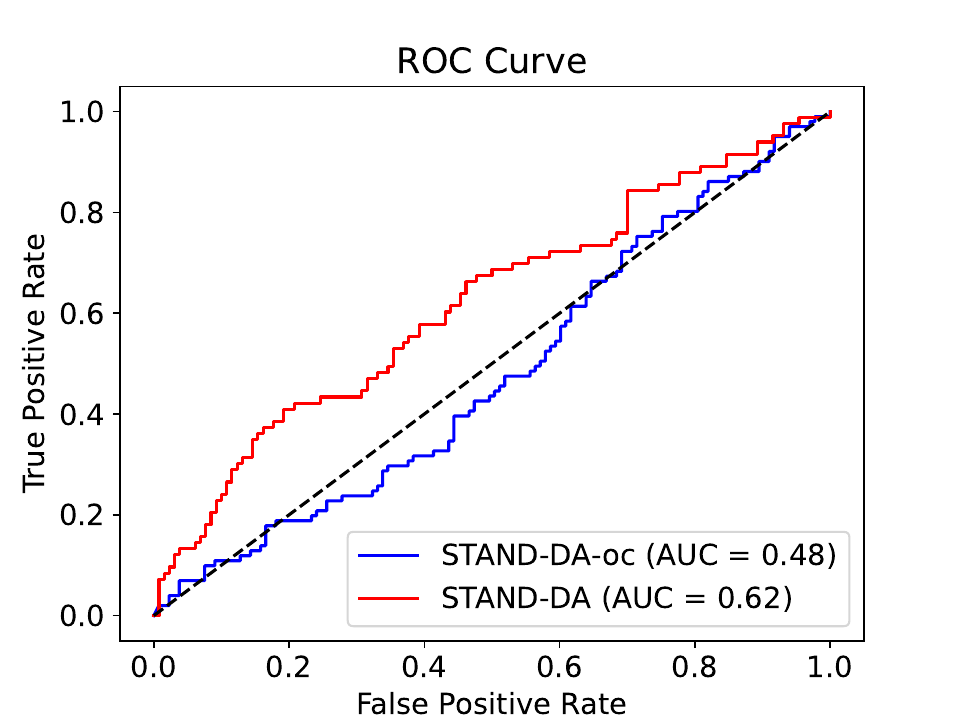}
         \caption{CDC Diabetes Health Indicators Dataset}
     \end{subfigure}
     \caption{ROC curves on real datasets}
     \label{fig:real_exp_roc_curves}
\end{figure}

We conducted comparisons on three real-world datasets: Heart Disease, Breast Cancer Wisconsin (Original) and
CDC Diabetes Health Indicators all available at the UCI Machine Learning Repository. We compared the $p$-values and ROC-curves of the $\texttt{STAND-DA}$ and $\texttt{STAND-DA-oc}$. 
\begin{itemize}
    \item \textbf{Heart Disease Dataset} contains 303 samples with 13 features, the goal is to predict presence of heart disease. We split domains by using male's samples as the source domain and female's samples as the target domain. The ratio between training and testing data is $0.5/0.5$. In the testing phase, we randomly selected instances from the source and target domains, with $n_s = 50$ and $n_t = 25$, respectively.
    
    \item \textbf{Breast Cancer Wisconsin (Original) dataset} contains 699 samples with 9 features to classify tumors as benign or malignant. We splited domain by using samples with $\text{``Clump Thickness''} \ge 3$ as the source domain and samples with $\text{``Clump Thickness''} < 3$ as the target domain. The ratio between training and testing data is $0.6/0.4$. In the testing phase, we randomly selected instances from the source and target domains, with $n_s = 50$ and $n_t = 25$, respectively.
    \item \textbf{CDC Diabetes Health Indicators dataset} contains 253,680 samples with 21 features to classify patients as $\text{``no diabetes''}$ and $\text{``prediabetes or diabetes''}$. We split domains by using female's samples as the source domain and male's samples as the target domain. The ratio between training and testing data is $0.6/0.4$. In the testing phase, we randomly selected instances from the source and target domains, with $n_s = 150$ and $n_t = 25$, respectively.
\end{itemize}

We conduct 120 independent runs for each experiment. During each run, all identified anomalies are evaluated, and their corresponding $p$-values are computed. 
Results are visualized in Fig. \ref{fig:real_exp_boxplots} and Fig. \ref{fig:real_exp_roc_curves}, which present the distributions of $p$-values and the ROC curves for both $\texttt{STAND-DA}$ and $\texttt{STAND-DA-oc}$, respectively. The ROC curves are derived by plotting the TPR against the FPR. To assess detection quality, we compare the area under the ROC curve (AUC) for each method, where a higher AUC reflects superior performance. As shown in Fig. \ref{fig:real_exp_boxplots}, the $p$-values from $\texttt{STAND-DA}$ are generally lower than those from $\texttt{STAND-DA-oc}$, suggesting that $\texttt{STAND-DA}$ achieves greater statistical power. Similarly, Fig. \ref{fig:real_exp_roc_curves} demonstrates that $\texttt{STAND-DA}$ consistently outperforms $\texttt{STAND-DA-oc}$.

\section{Conclusion}\label{sec:conclusion}

We introduced STAND-DA, a statistically rigorous framework for Autoencoder-based AD following RL-based DA. Leveraging the SI framework, STAND-DA provides valid $p$-values to rigorously control the FPR, even under complex DL architectures. To ensure practical scalability, we developed a GPU-accelerated implementation, enabling efficient application to real-world tasks. Experimental results validate both the effectiveness and efficiency of STAND-DA, establishing it as a reliable tool for statistically sound AD in domain-adapted settings.
Although this paper primarily focuses on AE-based AD and the RL-based DA method introduced in \cite{shen2018wasserstein}, our approach is readily applicable to a broader class of piecewise linear networks---i.e., models whose operations are characterized by linear inequalities or can be well-approximated by piecewise linear functions (many state-of-the-art DL models are, or can be,  approximated by piecewise linear networks). 
Our method can also be extended to the case where the mean squared error is used as the reconstruction error. In this setting, the set $\cZ_v$ can be characterized by a set of quadratic inequalities.
While our study centers on the AD task, extending the proposed method to other machine learning or computer vision tasks within the context of DL-based DA would represent a promising direction for future work.

% \backmatter

% \input{backmatter.tex} % Supplementary information, Acknowledgements

% \input{declarations.tex} % Declarations section

%%===========================================================================================%%
%% If you are submitting to one of the Nature Portfolio journals, using the eJP submission   %%
%% system, please include the references within the manuscript file itself. You may do this  %%
%% by copying the reference list from your .bbl file, paste it into the main manuscript .tex %%
%% file, and delete the associated \verb+\bibliography+ commands.                            %%
%%===========================================================================================%%
\bibliographystyle{sn-mathphys}
\bibliography{ref}% common bib file
%% if required, the content of .bbl file can be included here once bbl is generated
%%\input sn-article.bbl

\newpage

\begin{appendices}
\section{Appendix}

\subsection{Proof of Lemma \ref{lem:valid_p_value}}
\label{appx:proof_valid_selective_p}

We have 
\begin{align*}
    \bm \eta_j^\top {\rm vec}{X^s \choose X^t } \ \Bigg| \ \cC 
    \sim \mathbb{TN}\left( 
            \bm \eta_j^\top {{\rm vec}\left(M^s\right) \choose {\rm vec}\left(M^t\right)
            }, \bm \eta_j^\top \Sigma \bm \eta_j, \cZ
            \right),
\end{align*}

which is a truncated normal distribution with mean $\bm \eta_j^\top {{\rm vec}\left(M^s\right) \choose {\rm vec}\left(M^t\right)}$, variance $\bm \eta_j^\top \Sigma \bm \eta_j$, in which $\Sigma = \begin{pmatrix}
	\Sigma^s & 0 \\ 
	0 & \Sigma^t
\end{pmatrix}$, and the truncation region $\cZ$ described in \S\ref{subsec:truncation_region}. Therefore, under null hypothesis,

\begin{align*}
    p^{\rm selective}_j \ \Big| \ \cC   
    \sim \text{Unif}(0,1)
\end{align*}
Thus, $\mathbb{P}_{\rm H_{0, j}} \left( p^{\rm selective}_j \leq \alpha \ \Big| \ \cC 
\right) = \alpha, \forall \alpha \in [0,1]$.

Let us denote by $\cL$ the conditioning event of Eq. (\ref{eq:conditional_distribution}), defined as:
\begin{align*}
    \cL = \left\{ \cO_{X^s, X^t}
    =
    \cO_{\rm obs}, ~
    \cS_{X^s, X^t}
    =
    \cS_{\rm obs} \right\}
\end{align*}

Next, we have
\begin{align*}
    & \mathbb{P}_{\rm H_{0, j}} \left( p^{\rm selective}_j \leq \alpha \ \Big| \ \cL
\right)\\
    &= \int \mathbb{P}_{H_{0,j}} \left( p_j^{\rm selective} \leq \alpha \, \middle| \, \cC \right) \mathbb{P}_{H_{0,j}} \left( \cQ_{X^s, X^t} = \cQ_{\rm obs} \, \middle| \, \cL \right) \, d \cQ_{\rm obs} \\ 
    &= \int \alpha \, \mathbb{P}_{H_{0,j}} \left( \cQ_{X^s, X^t} = \cQ_{\rm obs} \, \middle| \, \cL \right) \, d\cQ_{\rm obs} \\ 
    &= \alpha \int \mathbb{P}_{H_{0,j}} \left( \cQ_{X^s, X^t} = \cQ_{\rm obs} \, \middle| \, \cL \right) \, d\cQ_{\rm obs} \\ 
    &= \alpha. 
\end{align*}

Finally, we obtain the result in Lemma \ref{lem:valid_p_value} as follows:
\begin{align*}
    \mathbb{P}_{\rm H_{0, j}} \left( p^{\rm selective}_j \leq \alpha
    \right) 
    &= \sum_{\cO_{\rm obs}} \sum_{\cS_{\rm obs}} \mathbb{P}_{\rm H_{0,j}} \left( p_j^{\rm selective} \leq \alpha \, \middle| \, \cL \right) \mathbb{P}_{\rm H_{0,j}} \left( \cL \right)\\
    &= \sum_{\cO_{\rm obs}} \sum_{\cS_{\rm obs}} \alpha \, \mathbb{P}_{\rm H_{0,j}} \left( \cO_{X^s, X^t}
    =
    \cO_{\rm obs}, ~
    \cS_{X^s, X^t}
    =
    \cS_{\rm obs} \right) \\
    &= \alpha \sum_{\cO_{\rm obs}} \sum_{\cS_{\rm obs}} \, \mathbb{P}_{\rm H_{0,j}} \left( \cO_{X^s, X^t}
    =
    \cO_{\rm obs}, ~
    \cS_{X^s, X^t}
    =
    \cS_{\rm obs} \right) \\
    &= \alpha.
\end{align*}

\subsection{Proof of Lemma \ref{lem:data_line}}
\label{appx:proof_data_line}

Base on the third condition in (\ref{eq:condition_event_set}), we have
\begin{align*}
    \cQ_{X^s, X^t} &= \cQ_{\rm obs} \\
    \Leftrightarrow \left( I_{n_s + n_t} - \bm b \bm \eta_j^\top \right) {\rm vec} { X^s \choose X^t } &= \cQ_{\rm obs} \\
    \Leftrightarrow {\rm vec} {X^s \choose X^t} &= \cQ_{\rm obs} + \bm b \bm \eta_j^\top {\rm vec} {X^s \choose X^t}.
\end{align*}
By defining $\bm a= \cQ_{\rm obs}, z = \bm \eta_j^\top {\rm vec}{X^s \choose X^t}$, and incorporating the third condition of (\ref{eq:condition_event_set}), we obtain Lemma \ref{lem:data_line}.

% \subsection{Proof of Lemma \ref{lem:activation_function}}
% \label{appx:proof_activation_function}

\subsection{Proof of Lemma \ref{lem:ad_algorithm}
\label{appx:proof_ad_algorithm}}
Let us define the input data, the data after the feature extraction and the reconstructed data as follows:
\begin{align*}
    &X(z) \in \mathbb{R}^{(n_s + n_t) \times d}, 
    \\
    &\tilde{X}(z) = f_{\rm extractor}(X(z)) = \tilde{A} + \tilde{B}z \in \mathbb{R}^{(n_s + n_t) \times d'}, \\ 
    &\hat{X}(z) = AE(\tilde{X}(z)) = \hat{A} + \hat{B}z \in \mathbb{R}^{(n_s + n_t) \times d'}.
\end{align*}
where $d'$ is the dimension of the data after the feature extraction, $f_{\rm extractor}$ and $AE$ are the feature extraction and autoencoder, respectively. Both $\tilde{X}(z) = \tilde{A} + \tilde{B}z$ and $\hat{X}(z) = \hat{A} + \hat{B}z$ can be recursively achieved through the process described in Lemma \ref{lem:activation_function}. The AE-based AD algorithm using $\ell_1$ reconstruction error can be described as follows: 

% Anomalies are identified by ranking samples according to their reconstruction error and labeling the top 5\% as outliers, using the $95^{\rm th}$ percentile as the anomaly threshold.:

\begin{enumerate}
    \item Let's denote by $\mathcal{R}(z)$ is the set of $\ell_1$ reconstruction errors:
    \begin{align*}
    {\mathcal{R}}(z) &= \left\{ \|\tilde{X}_i(z) - \hat{X}_i(z)\|^1_1, \forall i \in [n_s+n_t] \right\} \\
    &= \left\{ \sum_{j \in [d']} |\tilde{X}_{i,j}(z) - \hat{X}_{i,j}(z)|, \forall i \in [n_s+n_t] \right\}
    \end{align*}
    The sign subtractions in this step can be represented by the following sets:
    \[
    \cI^{a_{i}}_{\bm a + \bm b z} = \left\{{\rm sign}(\tilde{X}_{i, j}(z) - \hat{X}_{i, j}(z)), \forall j \in [d'] \right\}, \quad \forall i \in [n_s + n_t].
    \]

    \item Using the $k^{th}$ percentile as the anomaly threshold, i.e., the top $(100-k)\%$ samples have highest reconstruction error are considered anomalies. Let's denote by $\cR_k(z)$ is value at the $k^{th}$ percentile of the set $\cR(z)$, the percentile event in this step can be represented by the following sets:
    \begin{align*}
        \cI^b_{\bm a + \bm b z} = \left\{ i \in [n_s+n_t]: \cR_i(z) \geq \cR_k(z) \right\}, \\
        \cI^c_{\bm a + \bm b z} = \left\{ j \in [n_s+n_t]: \cR_j(z) < \cR_k(z) \right\},
    \end{align*}
\end{enumerate}
The condition on the entire AE-based AD process can be presented as: 
\begin{align}
    \cI_v = \left\{z \in \mathbb{R} ~ \Bigg| ~  
    \begin{array}{l}
    \cI^{a_i}_{\bm a + \bm b z} = \cI^{a_i}_v, \forall i \in [n_s + n_t], \\ \\
    \cI^{b}_{\bm a + \bm b z} = \cI^{b}_v, \\ \\
    \cI^{c}_{\bm a + \bm b z} = \cI^{c}_v
    \end{array}
    \right\}
    \label{condition_Iv}
\end{align}

For any data point $X(z)$, if it satisfies all the conditions in \eqref{condition_Iv}, then $\cI_{\bm a + \bm bz} = \cI_v$. 

\vspace{5pt}

The first condition in \eqref{condition_Iv} can be represented as a set of linear inequalities: 
\begin{align*}
    &\left\{ \cI^{a_i}_{\bm a + \bm b z} = \cI^{a_i}_v, \quad \forall i \in [n_s + n_t] \right\}, 
    \\
    \Leftrightarrow &\left\{ \cI^{a_{i,j}}_{\bm a+\bm b z} = \cI^{a_{i,j}}_v, \quad \forall i \in [n_s + n_t], \forall j \in [d'] \right\}, 
    \\
    \Leftrightarrow &\left\{ {\rm sign}(\tilde{X}_{i, j}(z) - \hat{X}_{i, j}(z)) = \cI^{a_{i,j}}_v, \quad \forall i \in [n_s + n_t], \forall j \in [d'] \right\}, 
    \\
    \Leftrightarrow &\left\{\cI^{a_{i,j}}_v * (\tilde{X}_{i, j}(z) - \hat{X}_{i, j}(z)) \geq 0, \quad \forall i \in [n_s + n_t], \forall j \in [d']\right\},
    \\
    \Leftrightarrow &\left\{\cI^{a_{i,j}}_v * \left[(\tilde{A}_{i, j}-\hat{A}_{i, j}) + (\tilde{B}_{i, j} - \hat{B}_{i, j})z \right] \geq 0, \quad \forall i \in [n_s + n_t], \forall j \in [d']\right\},
    \\
    \Leftrightarrow &\left\{ \bm r^a z \leq \bm t^a \right\}.
\end{align*}

The second condition in \eqref{condition_Iv} can also be represented as a set of linear inequalities:

\begin{align*}
    &\left\{ \cI^b_{\bm a+ \bm b z} = \cI^b_{v} \right\}, 
    \\
    \Leftrightarrow &\left\{ \cR_i(z) \geq \cR_k(z), \quad \forall i \in \cI^b_{v} \right\}, 
    \\
    \Leftrightarrow &\left\{ \sum_{j \in [d']} |\tilde{X}_{i,j}(z) - \hat{X}_{i,j}(z)| \geq \sum_{j \in [d']} |\tilde{X}_{k,j}(z) - \hat{X}_{k,j}(z)|, \quad \forall i \in \cI^b_v \right\}, 
    \\ 
    \Leftrightarrow &\left\{ \sum_{j \in [d']} \cI^{a_{i, j}}_{\bm a+\bm b z}\left(\tilde{X}_{i,j}(z) - \hat{X}_{i,j}(z)\right) \geq \sum_{j \in [d']} \cI^{a_{k, j}}_{\bm a+\bm b z}\left(\tilde{X}_{k,j}(z) - \hat{X}_{k,j}(z)\right), \quad \forall i \in \cI^b_v \right\}
    \\
    \Leftrightarrow &\left\{ 
    \begin{array}{l}
         \sum_{j \in [d']} \cI^{a_{i, j}}_{\bm a+\bm b z} \left[(\tilde{A}_{i, j}-\hat{A}_{i, j}) + (\tilde{B}_{i, j} - \hat{B}_{i, j})z \right] \geq  
         \\\\
         \sum_{j \in [d']} \cI^{a_{k, j}}_{\bm a+\bm b z} \left[(\tilde{A}_{k, j}-\hat{A}_{k, j}) + (\tilde{B}_{k, j} - \hat{B}_{k, j})z \right], \quad \forall i \in \cI^b_v
    \end{array}
    \right\},
    \\
    \Leftrightarrow &\left\{
    \begin{array}{l}
     - \left[\sum_{j \in [d']} \cI^{a_{i, j}}_{\bm a+\bm b z} * (\tilde{B}_{i, j}-\hat{B}_{i, j}) - \cI^{a_{k, j}}_{\bm a+\bm b z} *(\tilde{B}_{k, j}-\hat{B}_{k, j})\right]z \leq  
     \\\\
    \quad \;\,\sum_{j \in [d']} \cI^{a_{i, j}}_{\bm a+\bm b z} * (\tilde{A}_{i, j}-\hat{A}_{i, j}) - \cI^{a_{k, j}}_{\bm a+\bm b z} *(\tilde{A}_{k, j}-\hat{A}_{k, j}), \quad \forall i \in \cI^b_v
    \end{array}
    \right\},
    \\
    \Leftrightarrow &\left\{ \bm r^b z \leq \bm t^b \right\}.
\end{align*}

The final condition in \eqref{condition_Iv} can be characterized with the similar approach as the second condition. This condition also is represented as a set of linear inequalities: 
\begin{align*}
    &\left\{ \cI^c_{\bm a+ \bm b z} = \cI^c_{v} \right\},
    \\
    \Leftrightarrow &\left\{ \bm r^c z \leq \bm t^c\right\}.
\end{align*}

Finally, by defining 
\begin{align*} 
    \bm r = \begin{pmatrix}
        \bm r^a \\ \bm r^b \\ \bm r^c
    \end{pmatrix} \text{and} ~~ \bm t = \begin{pmatrix}
        \bm t^a \\ \bm t^b \\ \bm t^c
    \end{pmatrix},
\end{align*}
we obtain the result in Lemma \ref{lem:ad_algorithm}.

\subsection{Proof of Lemma \ref{lem:test_statistic_calculation}}
\label{appx:proof_test_statistic_calculation}
For any $X(z)$, the sign of subtractions in the computation of the test statistic in \eqref{eq:test_statistic} is defined as:
\begin{align*}
    \cS_{\bm a + \bm b z} &= \left\{{\rm sign}\left(X^t_{j,k} - \bar{X}^t_{j,k}\right), \quad \forall k \in [d] \right\}\\
    &= \left\{{\rm sign}
    \left(
    \left[
    \begin{pmatrix}
        \bm 0^{s} \\
        \bm e^{t}_j
    \end{pmatrix}^\top 
        \begin{pmatrix}
            X^s \\
            X^t
        \end{pmatrix}
        - \frac{1}{n_t - |\cO|} \sum \limits_{ \ell \in [n_t] \setminus \mathcal{O}}
        \begin{pmatrix}
        \bm 0^{s} \\
        \bm e^{t}_\ell
    \end{pmatrix}^\top 
        \begin{pmatrix}
            X^s \\
            X^t
        \end{pmatrix}
    \right]_{k}
    \right)
    , \quad \forall k \in [d]\right\},
\end{align*}
where $\bm e^t_j \in \mathbb{R}^{n_t}$ is a vector with 1 at the $j$-th position and 0 elsewhere. 

The conditional event $\left\{\cS_{\bm a+ \bm b z} = \cS_{\rm obs}\right\}$, w.r.t. $\begin{pmatrix}
    X^s \\
    X^t
\end{pmatrix} = A + Bz$,  
is characterized as:
\begin{align*}
    &\bm s_j^\top \circ \left(\begin{pmatrix}
        \bm 0^{s} \\
        \bm e^{t}_j
    \end{pmatrix}^\top \begin{pmatrix}
        A + Bz
    \end{pmatrix} - \frac{1}{n_t - |\cO|} \sum \limits_{ \ell \in [n_t] \setminus \mathcal{O}} \begin{pmatrix}
        \bm 0^{s} \\
        \bm e^{t}_l
    \end{pmatrix}^\top \begin{pmatrix}
        A + Bz
    \end{pmatrix}
    \right) \geq 0 \\
    \Leftrightarrow
    &- \bm s_j^\top \circ \left(
    \begin{pmatrix}
        \bm 0^{s} \\
        \bm e^{t}_j
    \end{pmatrix} - \frac{1}{n_t - |\cO|} \sum \limits_{ \ell \in [n_t] \setminus \mathcal{O}} 
    \begin{pmatrix}
        \bm 0^{s} \\
        \bm e^{t}_l
    \end{pmatrix}\right)^\top B z
    \leq
    \bm s_j^\top \circ \left(\begin{pmatrix}
        \bm 0^{s} \\
        \bm e^{t}_j
    \end{pmatrix} - 
    \frac{1}{n_t - |\cO|} \sum \limits_{ \ell \in [n_t] \setminus \mathcal{O}}
    \begin{pmatrix}
        \bm 0^{s} \\
        \bm e^{t}_l
    \end{pmatrix}\right)^\top A,
\end{align*}
where $\bm s_j$ is the sign vectors corresponding to $\cS_{\rm obs}$ defined in \eqref{eq:etaj}.
Finally, by defining 
\begin{align*}
    \bm w &=
    - \bm s_j^\top \circ \left(
    \begin{pmatrix}
        \bm 0^{s} \\
        \bm e^{t}_j
    \end{pmatrix} - \frac{1}{n_t - |\cO|} \sum \limits_{ \ell \in [n_t] \setminus \mathcal{O}} 
    \begin{pmatrix}
        \bm 0^{s} \\
        \bm e^{t}_l
    \end{pmatrix}\right)^\top B \\
    \text{and} \quad \bm o &= \bm s_j^\top \circ \left(
    \begin{pmatrix}
        \bm 0^{s} \\
        \bm e^{t}_j
    \end{pmatrix} - \frac{1}{n_t - |\cO|} \sum \limits_{ \ell \in [n_t] \setminus \mathcal{O}} 
    \begin{pmatrix}
        \bm 0^{s} \\
        \bm e^{t}_l
    \end{pmatrix}\right)^\top A,
\end{align*}
we obtain the result in Lemma \ref{lem:test_statistic_calculation}. 
 
\end{appendices}

\end{document}